\newtheorem{lemma}{\TE{Lemma}}
\newtheorem{assume}{\TE{Assumption}}
\algnewcommand{\LineComment}[1]{\State \(\triangleright\) #1}
\newcommand*{\colorboxed}{}
\def\colorboxed#1#{%
  \colorboxedAux{#1}%
}
\newcommand*{\colorboxedAux}[3]{%
  \begingroup
    \colorlet{cb@saved}{.}%
    \color#1{#2}%
    \boxed{%
      \color{cb@saved}%
      #3%
    }%
  \endgroup
}
\def\Eqref Eq:#1:{\eqref{eq:#1}}
\newcommand{\prettyreft}[1]{\text{\prettyref{#1}}}
\newcommand{\E}[1]{\mathbf{#1}}
\newcommand{\TE}[1]{\textbf{#1}}
\newcommand{\FPP}[2]{\frac{\partial{#1}}{\partial{#2}}}
\newcommand{\FPPR}[2]{{\partial{#1}}/{\partial{#2}}}
\newcommand{\TWO}[2]{\left(\setlength{\arraycolsep}{1pt}\begin{array}{cc}{#1}, & {#2}\end{array}\right)}
\newcommand{\TWOC}[2]{\left(\setlength{\arraycolsep}{1pt}\begin{array}{c}#1 \\ #2\end{array}\right)}
\newcommand{\FIVE}[5]{\left(\setlength{\arraycolsep}{1pt}\begin{array}{ccccc}{#1}, & {#2}, & {#3}, & {#4}, & {#5}\end{array}\right)}
\newcommand{\MTT}[4]{\left(\setlength{\arraycolsep}{1pt}\begin{array}{cc}#1 & #2 \\ #3 & #4\end{array}\right)}
\newcommand{\argcon}{\text{s.t.}\quad}
\newcommand{\argmin}[1]{\underset{#1}{\text{argmin}}\;}
\newcommand{\NN}{\E{n}}
\newcommand{\BO}{\mathcal{O}}
\newcommand{\SOSA}{\mathcal{SOS}_1}
\newcommand{\SOSB}{\mathcal{SOS}_2}
\newcommand{\CND}[2]{\NN_{{#1\rightarrow #2}}}
\newcommand{\CNN}[3]{\NN_{{#1\rightarrow #2#3}}}
\definecolor{motor}{HTML}{289942}
\definecolor{fixed}{HTML}{D02824}
\definecolor{movable}{HTML}{000000}
\definecolor{endeffector}{HTML}{3AB4E5}
\newcommand{\GAMMA}[5]{
\FPeval{\X}{cos((#1*45+45/2)*pi/180)*#4-#2}
\FPeval{\Y}{sin((#1*45+45/2)*pi/180)*#4+#3}
\put(\X,\Y){#5}
}
\newcommand{\GAMMAI}[5]{
\FPeval{\X}{cos((#1*45)*pi/180)*#4/2-#2}
\FPeval{\Y}{sin((#1*45)*pi/180)*#4/2+#3}
\put(\X,\Y){#5}
}
\definecolor{Blue}{rgb}{0.2, 0.2, 0.8}
\newcommand{\refined}[1]{#1}
\newcommand\BibTeX{{\rmfamily B\kern-.05em \textsc{i\kern-.025em b}\kern-.08em
T\kern-.1667em\lower.7ex\hbox{E}\kern-.125emX}}
\begin{document}

\runninghead{Zherong and Min}
\title{Joint Search of Optimal Topology and Trajectory for Planar Linkages}
\author{Zherong Pan\affilnum{1, 4, *} and Min Liu\affilnum{2, *} and Xifeng Gao\affilnum{3, 4} and Dinesh Manocha\affilnum{2}}

\affiliation{
\affilnum{*} indicates equal contribution.
\affilnum{1}Department of Computer Science, 
University of Illinois at Urbana-Champaign, Illinois IL 61801, USA.
\affilnum{2}Department of Computer Science and Electrical \& Computer Engineering, 
University of Maryland at College Park, Maryland MD 20742, USA.
\affilnum{3}Department of Computer Science,
Florida State University, Florida FL 32306, USA.
\affilnum{4}Lightspeed \& Quantum Studio, Tencent America, USA.
}
\allowdisplaybreaks

\corrauth{Min Liu}

\email{gfsliumin@gmail.com}

\begin{abstract}
We present an algorithm to compute planar linkage topology and geometry, given a user-specified end-effector trajectory. Planar linkage structures convert rotational or prismatic motions of a single actuator into an arbitrarily complex periodic motion, \refined{which is an important component when building low-cost, modular robots, mechanical toys, and foldable structures in our daily lives (chairs, bikes, and shelves). The design of such structures require trial and error even for experienced engineers. Our research provides semi-automatic methods for exploring novel designs given high-level specifications and constraints.} We formulate this problem as a non-smooth numerical optimization with quadratic objective functions and non-convex quadratic constraints involving mixed-integer decision variables (MIQCQP). We propose and compare three approximate algorithms to solve this problem: mixed-integer conic-programming (MICP), mixed-integer nonlinear programming (MINLP), and simulated annealing (SA). We evaluated these algorithms searching for planar linkages involving $10-14$ rigid links. Our results show that the best performance can be achieved by combining MICP and MINLP, leading to a hybrid algorithm capable of finding the planar linkages within a couple of hours on a desktop machine, which significantly outperforms the SA baseline in terms of optimality. We highlight the effectiveness of our optimized planar linkages by using them as legs of a walking robot.
\end{abstract} 

\keywords{{mixed integer, topology, geometry, optimization}}

\maketitle

\section{Introduction}
Over the past decades, robots have profoundly changed the industry, scaling up the productivity and simplifying the workflow of assembly lines. With their superior reliability and accuracy, however, comes a high fabrication cost and time-consuming maintenance. For robots to serve other aspects of our lives, they have to be versatile and adapt to rapidly changing tasks. To this end, we have witnessed an ongoing trend in both industry \citep{hebi_robotics} and research \citep{1159221,9115061} communities towards low-cost, modular robot designs. This type of modular hardware opens the door to a huge space of infinitely many robot designs that can be exploited to accomplish a variety of tasks. A pivotal challenge faced by a robot designer is to determine the ``optimal'' design to accomplish a given task. 

\begin{figure}[ht]
\centering
\includegraphics[width=0.45\textwidth]{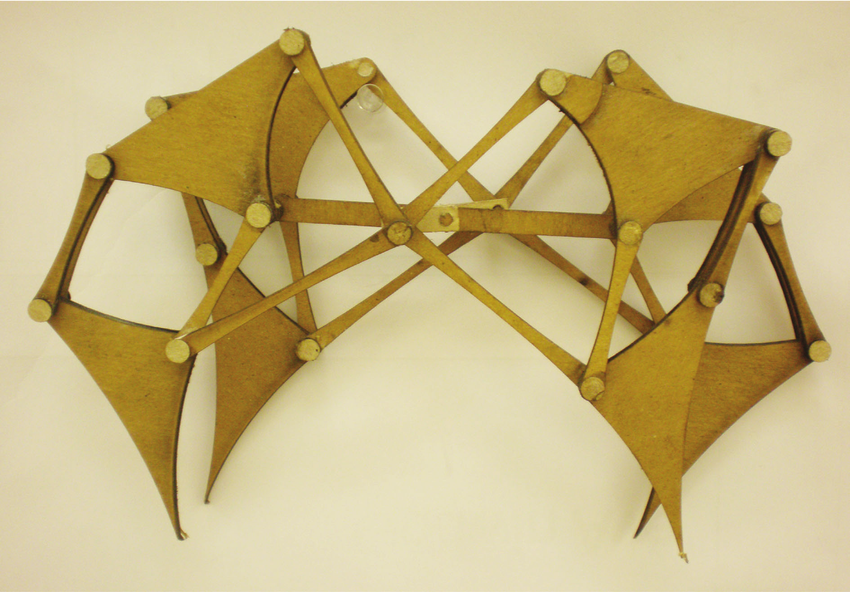}
\caption{\label{fig:strandbeest} The Jansen's strandbeest can be manufactured by putting together 4 planar linkages to the left and right of the robot's body; see \cite{nansai2013dynamic} for more details.}
\end{figure}
This paper addresses the problem of computational task-driven, design optimization for planar linkages. Planar linkages are mechanical structures built out of a set of rigid bodies connected by hinge joints. These structures are capable of converting simple rotational or linear motion into arbitrarily complex, curved motion. Being electronics-free and low-cost, planar linkages have been studied for centuries and used ubiquitously in mechanical tools, household accessories, and vehicles. As illustrated in \prettyref{fig:strandbeest}, they can also be integrated into low-cost robots to fulfill requirements of different types of locomotion for walking, swimming, and flying \citep{HERNANDEZ2016AHO,Thomaszewski:2014:CDL:2601097.2601143}. However, the design of planar structures requires significant trial and error, even for experienced engineers.

Computational robot design \citep{Ha2017JointOO,ha2018computational,whitman2020modular} has recently drawn increasing attention, partly due to high-performance computers. The decision space can grow exponentially with the complexity of the robot, so these methods are limited to designing sequential manipulators with no more than a few links. These links are standardized with fixed geometric shapes, so the design algorithm only needs to determine the connectivity (or topology) of those links, leaving only a discrete decision space of design variables to the designer. Prior works utilize algorithms such as $A^*$ search \citep{ha2018computational} and Q-learning \citep{whitman2020modular} to solve the underlying optimization problem. Standardized links are sufficient for sequential manipulators because they can use multiple actuators to move the end-effector. With a single actuator, however, the end-effector of a planar linkage can only trace out a single curve, and a design algorithm must jointly search for both the geometry and topology of links to cover a large variety of curve shapes, which poses a much more challenging decision-making problem in a mixed continuous-discrete space.

\TE{Main Results:} We propose a new algorithm to automatically search for topology and geometry for a large sub-class of planar linkages that are specified by prior works \cite{kecskemethy1997symbolic,Thomaszewski:2014:CDL:2601097.2601143,bacher2015linkedit}. Our main idea is to formulate the problem as a mixed-integer numerical optimization with quadratic objectives and non-convex, quadratic constraints, or MIQCQP. We then propose and compare the performance of three algorithms to approximate the optimal solutions: a mixed-integer conic programming (MICP) algorithm that uses a piecewise convex relaxation of non-convex constraints; a mixed-integer nonlinear programming (MINLP) algorithm that uses sequential quadratic programming (SQP) to find locally feasible solutions for non-convex constraints; a simulated annealing (SA) algorithm that randomizes both the geometry and topology, which is a variant of \cite{Zhu:2012:MMT:2366145.2366146}. We have evaluated our method in a row of optimization tasks with $5-7$ rigid bodies tracing out complex end-effector curves. These hybrid algorithms can find a solution within a couple of hours on a desktop machine, and the results exhibit an averaged $9.3\times$ higher optimality when compared with the SA-baseline. 

\refined{This paper is an extended version of our prior work \cite{ourwork2019ISRR}, where we proposed the original MICP relaxation scheme to solve MIQCQP approximately. We extend over the prior work in three ways. First, we propose a new MIQCQP-approximation scheme based on MINLP. MINLP solver tries to satisfy non-convex constraints exactly and achieves a better balance between computational time and the optimality of the resulting solution.. Second, we initialize MINLP solver using a similar approach as MICP, which allows MINLP to exhaustively try more initial guesses and improves its success rate. We also introduce a local optimization move into the SA baseline algorithm to improve its efficacy. Finally, we conduct simulated experiments to illustrate the application of our optimized planar linkages on robot locomotion. Specifically, we use a linkage as legs of a walking robot and optimize its dynamics properties to maximize the walking performance via Bayesian exploration.} 
\section{\label{sec:related}Related Work}
In this section, we review related work in robot design optimization, mixed-integer programming, and planar-linkage design.

\TE{Robot Design Optimization} is among the most challenging decision-making problems because the design algorithm must jointly reason about the robot design parameters and motion plans. This problem is a superset of conventional topology and truss optimization \citep{LIU2016161}, which does not involve movable components. Furthermore, the decision space of a robot design is oftentimes high-dimensional, and involves topology, geometry, and space-time variables. Existing approaches use one or more of those three variables to approximately search for optimal robot designs. In \cite{umetani2014pteromys,Thomaszewski:2014:CDL:2601097.2601143,bacher2015linkedit}, authors proposed human-in-the-loop design tools that either visualize the designed robot motion or locally optimize the robot's continuous geometric parameters. Our method is complementary to these works as it jointly optimizes the topology or geometry, although we can still find sub-optimal solutions or even fail to find a solution. In \cite{Zhu:2012:MMT:2366145.2366146,8793802}, the authors use stochastic optimization solvers such as SA and Bayesian optimization to search for robot topology. Our method provides an alternative, deterministic approach to solve the same problem. Most recent works \citep{Ha2017JointOO,spielberg2017functional,saar2018model,8794333} locally optimize robot's geometry given a user-provided topology and a geometric initial guess. However, figuring these topology and geometric initial guesses can still be labor-intensive.

\TE{Mixed-Integer Programming (MIP)} is a standardized tool to formulate mathematical programming problems with non-convex constraints that can be expressed as a disjoint set. Although solving general MIP is NP-hard, practical branch-and-bound (BB) algorithms \citep{lawler1966branch} can find global optima of MICP instances of small-to-medium sizes, where each member of the disjoint set is convex. BB algorithms rely on tight, convex relaxations to efficiently find lower bounds and cut off sub-optimal solutions at an early stage. BB serves as the computational engine of a large variety of problems, including inverse kinematics \citep{dai2017global}, network flows \citep{conforti2009network}, mesh generations \citep{bommes2009mixed}, motion planning with collision handling \citep{ding2011mixed}, and legged locomotion \citep{deits2014footstep}. We adopt a similar technique as these methods to formulate our topology optimization problem, where constraints with integer variables ensure the correctness of link connectivity. However, if the members of the disjoint set are non-convex, as it is the case with our geometric optimization problem, finding the exact global optima is intractable and two approaches can be used to approximate them. First, big-M methods \citep{bertsimas1997introduction}, McCormick envelopes, and piecewise approximations \citep{liberti2004reformulation} discretize a non-convex set as a union of convex sets, where the discretization error can be made arbitrarily small using higher resolutions of discretization and more integer decision variables. Second, MISQP algorithms \citep{exler2007trust} locally solve non-convex programs and use the solution in BB algorithms, which is not guaranteed to be a lower bound. Prior works like \cite{lobato2003mixed,kanno2013topology} have formulated topology optimization problems as MIP. Nevertheless, our work is the first one to formulate the planar linkage problem as MIP, and to employ MIP to find the optimal topology, geometry, and trajectory of a planar linkage concurrently.

\TE{Planar Linkage} is a set of 2D rigid objects connected together to convert motion and forces. We considered a subset of planar linkages connected by hinge joints. More generally, linkage structures can be coupled with pistons, gears, springs \citep{Zhu:2012:MMT:2366145.2366146}, and compliant structures \citep{megaro2017computational} to exhibit more complex motion of the end-effector. But their designs are still relying heavily on human experiences. Parallel to their applications in robotics, the mathematical structure of planar linkages has been studied for centuries. In 1875, \cite{kempe1875general} provided a constructive method to build a planar linkage that can trace out any algebraic curve, but the resulting linkage structure can be extremely complex. The recent work \cite{gallet2017planar} proposed a construction leading to simpler structures, but their complexities are still too high for real-world applications. Therefore, practitioners rely on genetic algorithms \citep{Zhu:2012:MMT:2366145.2366146,cabrera2002optimal} or semi-automatic design tools \citep{Thomaszewski:2014:CDL:2601097.2601143,bacher2015linkedit} to search for simple linkage structures that trace out a specified end-effector curve. 
\section{\label{sec:problem}Planar Linkages Optimization Problem}
\begin{figure*}[ht]
\centering
\begin{tabular}{@{}c@{}}
\includegraphics[width=0.25\textwidth]{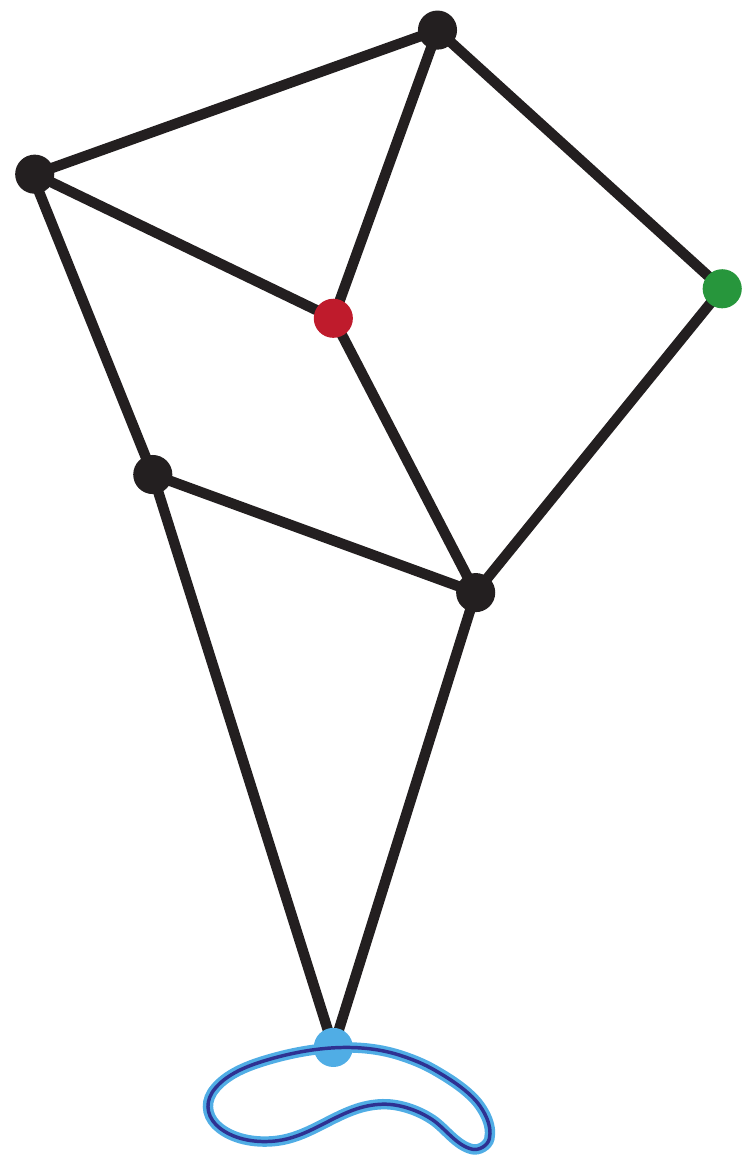}
\put(  2 ,140){\textcolor{motor}{\tiny$\NN_1,U_1=1$}}
\put(-62 ,137){\textcolor{fixed}{\tiny$\NN_2,U_2=1$}}
\put(-44 ,187){\textcolor{movable}{\tiny$\NN_3,U_3=1$}}
\put(-132,171){\textcolor{movable}{\tiny$\NN_4,U_4=1$}}
\put(-39 ,93 ){\textcolor{movable}{\tiny$\NN_5,U_5=1$}}
\put(-93 ,116){\textcolor{movable}{\tiny$\NN_6,U_6=1$}}
\put(-62 ,23 ){\textcolor{endeffector}{\tiny$\NN_7,U_7=1$}}
\put(-26 ,168){\textcolor{movable}{\tiny$Q_{13}=0.5$}}
\put(-22 ,116){\textcolor{movable}{\tiny$Q_{15}=0.5$}}
\put(-59 ,155){\textcolor{movable}{\tiny$Q_{23}=0.5$}}
\put(-94 ,155){\textcolor{movable}{\tiny$Q_{24}=0$}}
\put(-57 ,120){\textcolor{movable}{\tiny$Q_{25}=0.5$}}
\put(-94 ,182){\textcolor{movable}{\tiny$Q_{34}=2$}}
\put(-132,130){\textcolor{movable}{\tiny$Q_{46}=3$}}
\put(-83 ,94 ){\textcolor{movable}{\tiny$Q_{56}=1$}}
\put(-53 ,58 ){\textcolor{movable}{\tiny$Q_{57}=1$}}
\put(-112,70 ){\textcolor{movable}{\tiny$Q_{67}=5$}}
\put(-100,40){(a)}
\end{tabular}
\hfill
\begin{tabular}{@{}c@{}}
\includegraphics[width=0.4\textwidth]{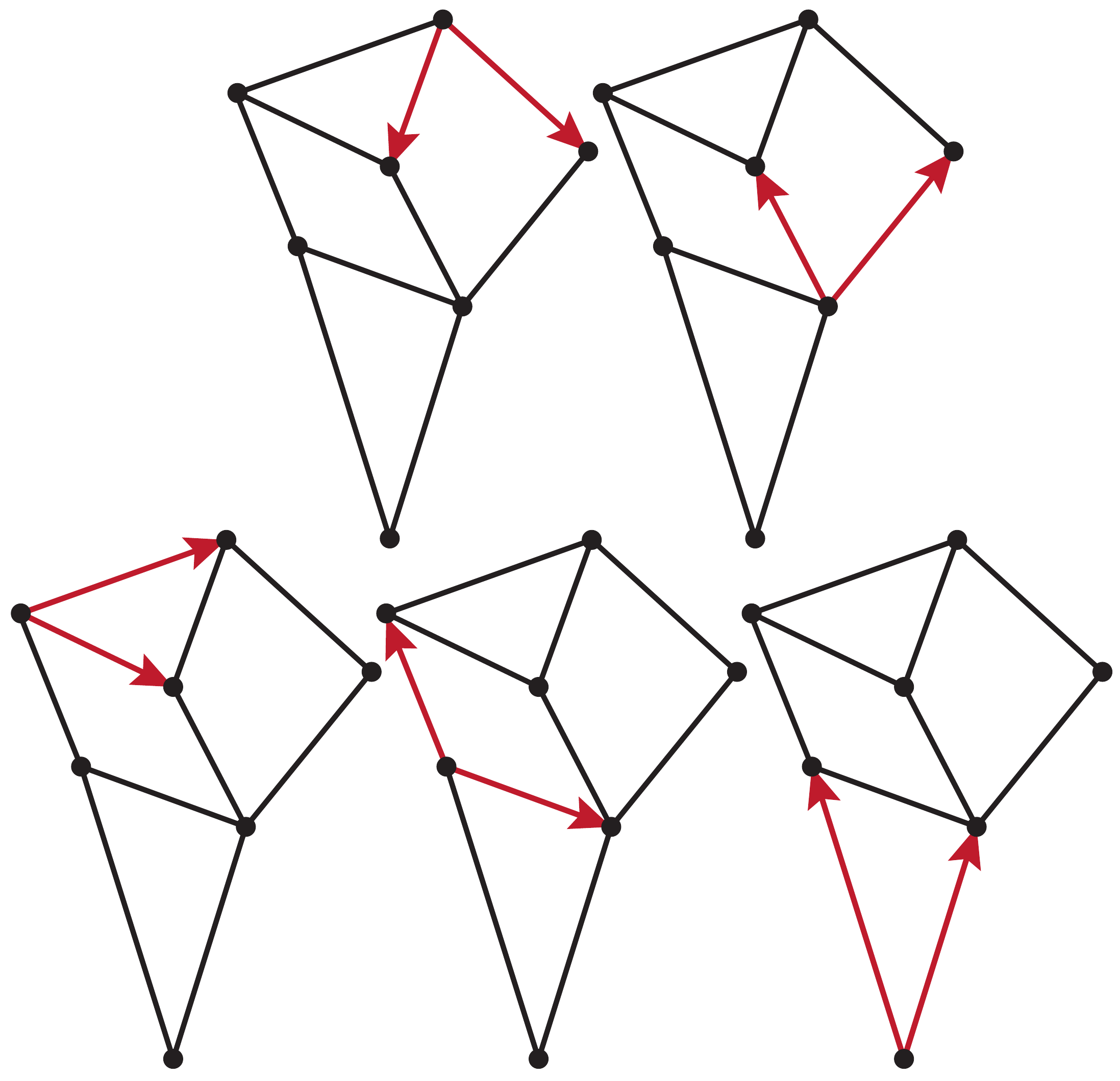}
\put(-120,115){$\CNN{3}{2}{1}$}
\put(-55 ,115){$\CNN{5}{2}{1}$}
\put(-157,25 ){$\CNN{4}{3}{2}$}
\put(-92 ,25 ){$\CNN{6}{5}{4}$}
\put(-27 ,25 ){$\CNN{7}{6}{5}$}
\put(-170,110){(b)}
\end{tabular}
\hfill
\begin{tabular}{@{}c@{}}
\includegraphics[width=0.2\textwidth]{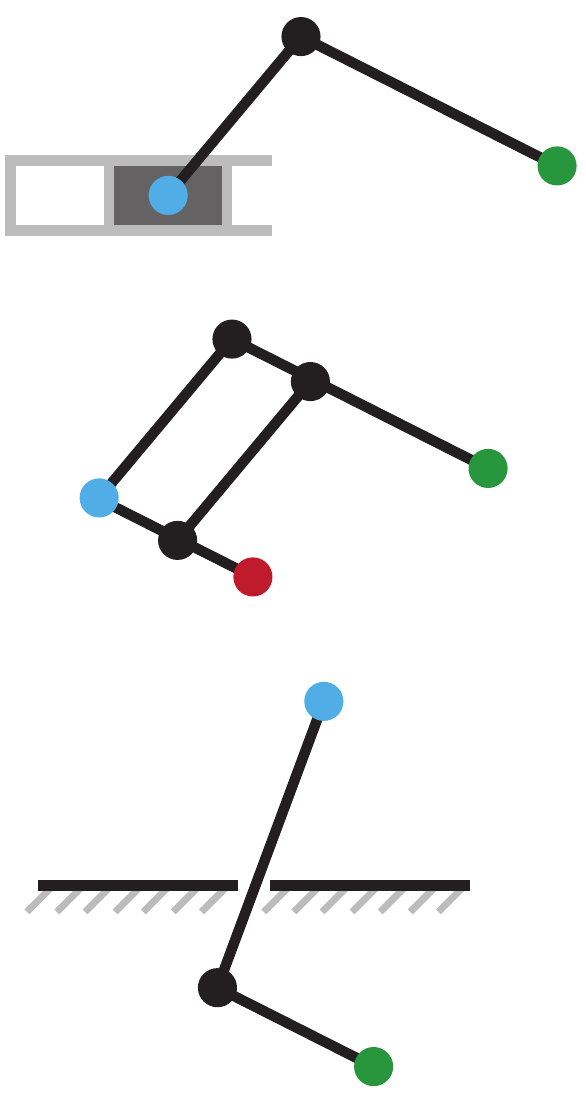}
\put(-80,60){(c)}
\end{tabular}
\caption{\label{fig:linkage} (a): Jansen's mechanics is a planar linkage involving 7 nodes. The motor node $\NN_1$ is green, the fixed node $\NN_2$ is red, the movable nodes $\NN_{3,4,5,6}$ are black, and the end-effector node $\NN_7$ is blue. (b): Our method is based on a prior symbolic representation \citep{kecskemethy1997symbolic,bacher2015linkedit}. This representation assumes that each node is connected to zero or two other nodes with lower indices \refined{(marked by arrows)}: $\CNN{3}{2}{1}$, $\CNN{5}{2}{1}$, $\CNN{4}{3}{2}$, $\CNN{6}{5}{4}$, $\CNN{7}{6}{5}$. \refined{(c): We show three important linkage structures beyond our topology subset, which involve pistons, complex loops, and pinhole constraints.}}
\end{figure*}
In this section, we define a subset of the planar linkages considered in this paper. As illustrated in \prettyref{fig:linkage} (a), we have a set of rod-like rigid bodies connected using hinge joints. The end points of these rigid bodies can take at most $N$ distinct positions, denoted as a node set $\NN_{1,\cdots,N}$, of which $\NN_1$ is the rotational motor and $\NN_N$ is the end-effector. Within one limit cycle, $\NN_1$ traces out a circular curve centered at $\TWO{X_C}{Y_C}$ with a radius $r$:
\begin{align}
\label{eq:motor}
\NN_1(t)=\TWO{\sin(\pm t)r+X_C}{\cos(\pm t)r+Y_C},
\end{align}
which induces trajectories of other nodes $\NN_i(t)$ via forward kinematics, where $t\in[0,2\pi)$ is the time parameter. Throughout the paper, we use $X,Y$ to denote the two axes of a 2D vector. The other $N-2$ nodes can be one of two kinds: fixed or movable. In addition, a rigid body may exist between each pair of nodes $\NN_{i,j}$, in which case $\|\NN_i(t)-\NN_j(t)\|$ must be a constant for all $t$. Given these definitions, we formulate the optimal planar linkage design problem as follows, where we take the following inputs:
\begin{itemize}[leftmargin=*]
\item A user-provided target end-effector trajectory $\NN_N^*(t)$. 
\item $K$: The maximal number of nodes in the planar linkage.
\item $T$: The number of samples needed to discretize the end-effector trajectory $\NN_N(t)$.
\item $S$: The parameter controlling the accuracy of the MICP formulation. A larger $S$ leads to greater accuracy and higher computational cost.
\end{itemize}
The output of our method is the tuple $\mathcal{L}=\left<N,C_{ji},\NN_i(t),X_C,Y_C,r\right>$ defining both the topology and geometry of a planar linkage:
\begin{itemize}[leftmargin=*]
\item An integer vector of size $N$ (the number of nodes) containing the type of each node: fixed or movable.
\item An $N\times N$ adjacent matrix $C^{N\times N}$ where $C_{ji}=1$ means a rigid body connects $\NN_i$ and $\NN_j$.
\item The position of $\NN_{1,\cdots,N}(t)$ at a certain, arbitrary time instance $t$.
\item $X_C,Y_C,r$ are determined automatically by our MICP formulation.
\end{itemize}
The goal of our method is to find the above set of variables that minimizes the cost $\int_0^{2\pi} \|\NN_N(t)-\NN_N^*(t)\|^2 dt$. For two planar linkage structures, we claim that one is more accurate or optimal than the other if its end-effector trajectory incurs a smaller cost.

\subsection{\refined{Constrained Linkage Kinematics}}
\refined{A valid planar linkage has only one degree of freedom, and the positions of all the nodes must be uniquely determined at a given time $t$ and a fixed initial configuration.} Therefore, planar linkages must use closed loops to eliminate all the redundant degrees of freedom. \refined{However, computing the forward kinematics for general, closed-loop articulated bodies involves solving constrained systems of equations \citep{featherstone2014rigid}, which increases the complexity of the search of their topological structures. Therefore, we limit our research to a subset of linkage topology, which was originally proposed by \cite{kecskemethy1997symbolic} and later adapted to human-assisted linked design in \cite{bacher2015linkedit}. The kinematics of this subset can be computed as easily as open-loop articulated bodies.}

\refined{The key to our kinematic computation lies in the law-of-cosine. Specifically, if a node $\NN_i$ is connected to two other nodes $\NN_j,\NN_k$  with known positions via rigid links with length $l_{ji},l_{ki}$, then the position of $\NN_i$ can be determined using the following function:
\begin{equation}
\begin{aligned}
\label{eq:forward}
&\NN_i(\NN_j,\NN_k,l_{ji},l_{ki})\triangleq
\frac{l_{ji}}{\|\NN_j-\NN_k\|}\E{R}(\NN_k-\NN_j)+\NN_j\\
&\E{R}\triangleq\MTT{\cos}{\pm\sqrt{1-\cos^2}}{\mp\sqrt{1-\cos^2}}{\cos}\\
&\cos\triangleq\frac{\|\NN_j-\NN_k\|^2+l_{ji}^2-l_{ki}^2}{2\|\NN_j-\NN_k\|l_{ji}},
\end{aligned}
\end{equation}
where $l_{ji}$ is the length of the link connecting $\NN_i$ and $\NN_j$ and $\E{R}$ is a rotation matrix. Note that, if we have $\|\NN_j-\NN_k\|>l_{ji}+l_{ki}$ or $\|\NN_j-\NN_k\|<|l_{ji}-l_{ki}|$ at certain time instance, then the three nodes cannot form a triangle and such a configuration cannot be realized. The above law-of-cosine has two solutions (determined by the sign of off-diagonal entries of $\E{R}$) corresponding to two mirrored triangles, but the linkage can only exhibit a unique, continuous motion without flipping any triangles. As proposed in \cite{kecskemethy1997symbolic}, we construct a planar linkage by recursively connecting a new node with two other nodes using rigid links. As a result, the position of each new node can be determined by the law-of-cosine. As illustrated in \prettyref{fig:linkage} (b), such topological constraints can be formalized using a connectivity graph $\mathcal{G}=\left<\mathcal{V},\mathcal{E}\right>$, where the vertex set $\mathcal{V}=\{\NN_1,\cdots,\NN_N\}$ consists of all the nodes and the edge set $\mathcal{E}=\{\CND{a}{b}\}$ consists of node pairs connected by a link, on which our topological constraint can be summarized below:
\begin{assume}
\label{ass:1}
The vertex set $\mathcal{V}$ of the connectivity graph has a topological ordering, by which each node $\NN_i$ is connected to either zero or two other nodes with lower indices (denoted as $\CNN{i}{j}{k}\triangleq\{\CND{i}{j},\CND{i}{k}\}$ with $i>j$ and $i>k$).
\end{assume}
With the topological ordering, we can determine the position of each node in ascending order. A node connected to zero lower-index nodes is either the actuator $\NN_1$ or a fixed node, otherwise, we apply the law-of-cosine to $\CNN{i}{j}{k}$ and determine the position of $\NN_i$. Using this method, the forward kinematics of the entire linkage structure can be computed within $\mathcal{O}(N)$ and Jacobian matrix with respect to link lengths can also be computed within $\mathcal{O}(N)$ using the adjoint method as summarized in \prettyref{alg:forward}. This Jacobian matrix can be used to locally optimize the geometry of a linkage structure using a gradient-based algorithm as done in \cite{bacher2015linkedit}. We emphasize that many important linkage structures, involving pistons, complex loops, or pinhole constraints, are beyond our subset, as illustrated in \prettyref{fig:linkage} (c). However, our subset already encompasses a rich variety of end-effector trajectories as illustrated by our results.}
\begin{algorithm}[ht]
\caption{Forward/Inverse Kinematics}
\label{alg:forward}
\begin{algorithmic}[1]
\State Compute $\NN_1$ using \prettyref{eq:motor}
\For{$i=2,\cdots,N$}\Comment{Forward Kinematics}
\If{$\NN_i$ is not fixed with $\CNN{i}{j}{k}\subset\mathcal{E}$}
\State Compute $\NN_i$ using \prettyref{eq:forward}
\State Output $\NN_i$
\EndIf
\EndFor
\For{$i=2,\cdots,N$}\Comment{Inverse Kinematics}
\If{$\NN_i$ is not fixed with $\CNN{i}{j}{k}\subset\mathcal{E}$}
\State Compute $\FPP{\NN_i}{l_{ji}},\FPP{\NN_i}{l_{ki}},\FPP{\NN_i}{\NN_j},\FPP{\NN_i}{\NN_k}$ using \prettyref{eq:forward}
\State $\FPP{\NN_N}{l_{ji}}\gets\FPP{\NN_N}{\NN_i}\FPP{\NN_i}{l_{ji}}\quad
\FPP{\NN_N}{l_{ki}}\gets\FPP{\NN_N}{\NN_i}\FPP{\NN_i}{l_{ki}}$
\State Output $\FPP{\NN_N}{l_{ji}},\FPP{\NN_N}{l_{ki}}$
\State $\FPP{\NN_N}{\NN_j}\gets\FPP{\NN_N}{\NN_i}\FPP{\NN_i}{\NN_j}\quad
\FPP{\NN_N}{\NN_k}\gets\FPP{\NN_N}{\NN_i}\FPP{\NN_i}{\NN_k}$\Comment{Adjoint}
\EndIf
\EndFor
\end{algorithmic}
\end{algorithm} 
\section{Planar Linkage Optimization as MIQCQP}
In this section, we show that the optimal planar linkage design problem can be reformulated as a MIQCQP. Such reformulation allows us to utilize mature algorithms and approximation techniques to find (nearly) optimal solutions. Unlike prior methods \citep{Thomaszewski:2014:CDL:2601097.2601143,bacher2015linkedit} for planar linkage optimization that are based on minimal coordinates, we propose to use maximal coordinates to represent the configuration. Maximal coordinates treat all the node positions $\NN_i$ as independent decision variables and introduce additional constraints to ensure the rigidity of each link. Although maximal coordinates use more variables and involve solving constrained systems of equations, the constraints take a simpler form to be handled by numerical optimization tools. A similar idea has been used by \cite{dai2017global} to compute globally optimal inverse kinematics for sequential manipulators. Conceptually, our goal is to solve the following infinite-dimensional program:
\begin{equation}
\begin{aligned}
\label{eq:MIQCQP}
\argmin{C_{ji},\NN_i(t),X_C,Y_C}&\int_0^{2\pi}\|\NN_N(t)-\NN_N^*(t)\|^2 dt + \text{reg.}\\
\argcon{}&\text{Topological Constraints}\\
&\text{Geometric Constraints},
\end{aligned}
\end{equation}
where the main objective is to search for a linkage structure whose end-effector curve matches the user-provided target curve as much as possible. Meanwhile, we introduce a regularization term to reduce the fabrication cost, which penalizes the number of links and the total length of links. To ensure that the linkage is realizable and well-behaved, we introduce several sets of constraints as summarized in \prettyref{table:consTable}. The topological constraints ensure that the linkage structure satisfies \prettyref{ass:1}, and the geometric constraints ensure the kinematic feasibility. In the following sections, we use additional notations to mark the range of indices to which a constraint applies. If no notations are used, then the constraint applies to all the index combinations. Furthermore, we assume a variable is continuous with no bounds, unless otherwise specified (e.g., as a binary variable).
\begin{table}[ht]
\setlength{\tabcolsep}{1pt}
\begin{center}
\footnotesize
\begin{tabular}{ll}
\toprule
Constraint Set & Guarantees\\
\midrule
NodeUsageConstraint & unambiguous node type definitions\\
NodeConnectivityConstraint & node connectivity satisfies \prettyref{ass:1}\\
NoWasteConstraint & each node affects the end-effector trajectory\\
MovableNodeConstraint & movable nodes are connected to actuator\\
\midrule
RealizabilityConstraint & linkage structure can be fabricated\\
AreaConstraint & end-effector trajectory is unique\\ 
MotorConstraint & motor is rotational\\
\bottomrule
\end{tabular}
\end{center}
\caption{\label{table:consTable}\refined{A summary of topology and geometric constraint sets and the guarantees corresponding to each constraint.}}
\vspace{-20px}
\end{table}

\subsection{Topological Constraints}
\refined{We design four types of topological constraints. Our first set of constraints is denoted as \TE{NodeUsageConstraint}, which allows a numerical optimizer to automatically determine the number of nodes and links to use. We further ensure that each node can either be movable or fixed, but not both.} Since the number of nodes is unknown, we assume that the maximum number of nodes is $K\geq N$. We will have then all the joints defined, but only $N$ of them should be present, which will be formalized by introducing a binary indicator variable $U_i$. For each node other than the first motor node $\NN_1$, $U_i=1$ indicates that $\NN_i$ will be present as a part of the planar linkage structure. In addition, we need another binary variable $F_i$ such that $F_i=1$ indicates that $\NN_i$ is fixed and $F_i=0$ indicates that $\NN_i$ is movable. These two sets of variables are under the constraint that only a used node can be movable. In addition, we assume that the last node $\NN_K$ is the end-effector that must be used. In summary, we introduce the following sets of variables and node-state constraints:
\begin{equation}
\begin{aligned}
\label{eq:state}
&U_i,F_i\in\{0,1\}\\
&1-F_i\leq U_i  \quad
U_1=U_K=1   \quad
F_1=0.
\end{aligned}
\end{equation}

\refined{Our next set of constraints is denoted as \TE{NodeConnectivityConstraint}, which ensures that each movable node is connected to exactly two other nodes with lower indices.} As a result, the movable node and the two other ones will form a triangle and the position of the movable node can then be determined via the law-of-cosine. We introduce auxiliary variables $C_{ji}^1$ to indicate whether $\NN_j$ is the first node to which $\NN_i$ is connected. $C_{ji}^2$ indicates whether $\NN_j$ is the second node to which $\NN_i$ is connected. In addition, we introduce two verbose variables $C_{0i}^{1,2}=1$ to indicate that $\NN_i$ is connected to nothing, which is the case when $\NN_i$ is fixed or unused. The resulting constraint set is:
\begin{equation}
\begin{aligned}
\label{eq:connectivity}
&C_{ji}^1,C_{ji}^2\in\{0,1\} \quad\forall 1\leq j<i\leq K   \\
&C_{ji}=C_{ji}^1+C_{ji}^2\in[0,1]    \quad
C_{ji}^1\leq U_j\land C_{ji}^2\leq U_j   \\
&\sum_{j=1}^{i-1} C_{ji}=2-2F_i \quad\forall 2\leq i\leq K \\
&C_{0i}^d\in\{0,1\} \quad \sum_{j=0}^{i-1} C_{ji}^d=1.
\end{aligned}
\end{equation}
When $\NN_i$ is fixed in the above formulation, then $F_i=1$ in \prettyref{eq:connectivity} and all $C_{ji}$ are zero except for $C_{0i}^{1,2}=1$ due to the sum-to-one constraints. If $\NN_i$ is movable, then $F_i=0$ and $C_{ji}$ sums to two. As a result, there must be $j_1,j_2<i$ such that $C_{j_1i}^1=1$ and $C_{j_2i}^2=1$. Note that $j_1$ and $j_2$ must be different because otherwise the constraint that $C_{ji}\in[0,1]$ will be violated. In addition, since the first node $\NN_1$ is the motor node, it is excluded from these connectivity constraints. 

\refined{Our third set of constraints is denoted as \TE{NoWasteConstraint}, which ensures that the linkage structure contains no wasted parts.} In other words, each node must have some influence on the trajectory of the end-effector node and the first motor node must be connected to others. We model these constraints using the MICP formulation of network flows \citep{conforti2009network}. Specifically, each node $\NN_i$ will generate a flux that equals to $U_i$, and we assume that there is a flow edge defined between each pair of nodes with capacity $Q_{ji}$. We require inward-outward flux balance for each node except the end-effector node:
\begin{equation}
\begin{aligned}
\label{eq:balance}
&Q_{ji}\geq0 \quad\forall 1\leq j<i\leq K    \\
&Q_{ji}\leq C_{ji}K \\
&U_i+\sum_{j=1}^{i-1} Q_{ji}=\sum_{k=i+1}^{K} Q_{ik} 
\quad\forall 1\leq i\leq K-1,
\end{aligned}
\end{equation}
\refined{where the lefthand (resp. righthand) side of the last equation equals to the inward (resp. outward) flux. The inward flux is a sum of the newly generated flux $U_i$ and the flux passed on from nodes with lower-indices. Note that the flux-balance condition is imposed on every node except for the end-effector node, which means that only the end-effector node can deplete fluxes. As a result, every node must be connected to the end-effector node in order for its newly generated flux $U_i$ to be depleted. We illustrate one solution of $U_i,Q_{ji}$ in \prettyreft{fig:linkage} (a).} Here we adopt the big-M method \citep{bertsimas1997introduction} in the second constraint to ensure that only edges between connected nodes can have a capacity up to $K$. Big-M is a well-known method in mixed-integer modeling for choosing one element from a discrete set, or for choosing one case from several possible cases. 

\refined{Finally, using a similar idea, we also formulate a constraint that restricts a movable node to be connected to at least one other movable node (otherwise the movable node never moves), which is denoted as \TE{MovableNodeConstraint}.} We assume that each node $\NN_i$ generates a reversed outward flux that equals to $1-F_i$ and there is a flow edge defined between each pair of nodes with capacity $R_{ji}$. We require an inward-outward flux balance for each node except for the motor node:
\begin{equation}
\begin{aligned}
\label{eq:balance2}
&R_{ji}\geq0 \quad\forall 1\leq j<i\leq K   \\
&R_{ji}\leq C_{ji}K\quad R_{ji}\leq (1-F_j)K    \\
&\sum_{j=1}^{i-1} R_{ji}=1-F_i+\sum_{k=i+1}^{K} R_{ik} \quad\forall 2\leq i\leq K.
\end{aligned}
\end{equation}
These four constraints ensure that the planar linkage structure is symbolically correct, independent of the concrete geometric shape. In summary, our topological constraints involve:
\begin{equation*}
\begin{aligned}
&\text{Topological Constraints}\triangleq\\
&\begin{cases}
\prettyreft{eq:state}: \text{NodeUsageConstraint}\\
\prettyreft{eq:connectivity}: \text{NodeConnectivityConstraint}\\
\prettyreft{eq:balance}: \text{NoWasteConstraint}\\
\prettyreft{eq:balance2}: \text{MovableNodeConstraint}\\
\end{cases}.
\end{aligned}
\end{equation*}
A planar linkage satisfies \prettyref{ass:1} if topological constraints are satisfied. All the equations in this section are mixed-integer linear constraints that can be satisfied using an off-the-shelf MICP solver such as \cite{gurobi} as long as a solution exists.

\subsection{Reducing Binary Variables}
MIP solvers build a search tree by branching on binary variables whose continuous relaxation is not exact, so the size of the search tree and the performance of MIP solvers is closely related to the number of binary variables. Altogether, our topological constraints use $4K+K(K+1)$ binary variables, where the quadratic term comes from $C_{ji}^1,C_{ji}^2$. We can further reduce the number of variables to $\BO(K\lceil\log K\rceil)$ by adopting the idea of a special ordered set of type 1 ($\SOSA$) \citep{vielma2011modeling}. Intuitively, $\SOSA$ is a constraint that only one out of a set of $K$ variables can take a non-zero value. Their main idea is to order these variables from $1$ to $K$ and choose a number within this range. To this end, a binary variable is introduced to indicate whether each binary bit is $1$.  We can apply this idea to \prettyref{eq:connectivity} by observing that the two constraints $C_{ji}^d\in\{0,1\}$ and $\sum_{j=0}^{i-1}C_{ji}^d=1$ is equivalent to:
\begin{align*}
&\{C_{ji}^d|j=0,\cdots,i-1\}\in\SOSA\\
&\sum_{j=0}^{i-1}C_{ji}^d=1.
\end{align*}
The $\SOSA$ constraints can be converted to conventional linear constraints via \prettyref{alg:SOSA}, where we introduce at most $\lceil\log K\rceil$ auxiliary binary variables denoted as: $\mathbb{I}_1,\cdots,\mathbb{I}_{\lceil\log K\rceil}$.
\begin{algorithm}[ht]
\caption{Constraint $\{C_{ji}^d|j=0,\cdots,i-1\}\in\SOSA$}
\label{alg:SOSA}
\begin{algorithmic}[1]
\Require{$\mathbb{I}_1,\cdots,\mathbb{I}_{\lceil\log i\rceil}\in\{0,1\}$}
\For{$j=0,\cdots,i-1$}
\For{bit=$1,\cdots,\lceil\log i\rceil$}
\If{$j\land2^{\text{bit}-1}=0$}\Comment{Bitwise and}
\State Add constraint $C_{ji}^d\leq\mathbb{I}_\text{bit}$ 
\Else
\State Add constraint $C_{ji}^d\leq1-\mathbb{I}_\text{bit}$ 
\EndIf
\EndFor
\EndFor
\end{algorithmic}
\end{algorithm}

\subsection{Geometric Constraints}
\refined{We introduce three sets of geometric constraints to ensure that our linkage structure can be fabricated and generate unique end-effector trajectories. Our first constraint set is denoted as \TE{RealizabilityConstraint}, which ensures that the node positions at all time instances $t$ can be realized by the same set of links, so that the linkage structure can be fabricated.} For a pair of nodes $\NN_i$ and $\NN_j$ ($j<i$), there might be a link connecting them as indicated by the variable $C_{ji}^d$. \prettyref{eq:connectivity} dictates that there is a link between the two nodes if and only if $\sum_{d=1}^2C_{ji}^d=1$. Further, we ensure that the node never moves if $F_i=1$. Put together, our realizability constraint takes the following form:
\begin{align*}
&\|\NN_i(t_1)-\NN_j(t_1)\|^2=\|\NN_i(t_2)-\NN_j(t_2)\|^2\\
&\forall 0\leq t_1<t_2\leq2\pi\land \sum_{d=1}^2C_{ji}^d=1\\
&\|\NN_i(t_1)-\NN_i(t_2)\|=0
\quad\forall 0\leq t_1<t_2\leq2\pi\land F_i=1.
\end{align*}
We need to absorb the decision variable $C_{ji}^d$ into the constraint to be handled by the optimizer, to which end the big-M method can be used to derive the following equivalent constraints:
\begin{align}
\label{eq:length}
&\|\NN_i(t)-\NN_j(t)-\E{d}_{di}(t)\|\leq(1-C_{ji}^d)2\sqrt{2}B
\quad\forall 2\leq i\leq K\nonumber\\
&\|\E{d}_{di}(t_1)\|^2=\|\E{d}_{di}(t_2)\|^2
\quad\forall 0\leq t_1<t_2\leq2\pi  \\
&\|\NN_i(t_1)-\NN_i(t_2)\|\leq(1-F_i)2\sqrt{2}B
\quad\forall 0\leq t_1<t_2\leq2\pi.\nonumber
\end{align}
The big-M method requires an upper bound on the length difference between the two nodes at different time instances. If we assume the planar linkage is bounded inside a box with side length $B$, then a conservative upper bound is twice the diagonal length $2\sqrt{2}B$. Here we introduce the slack variables $\E{d}_{di}(t)$ that are constrained to be equal to $\NN_i(t)-\NN_j(t)$ when $C_{ji}^d=1$, in which case the equal-length constraint is specified for $\E{d}_{di}(t)$ instead. Otherwise, when $\NN_i$ is fixed, $\E{d}_{di}(t)$ can take any value and the equal-length constraint can be trivially satisfied.

\refined{Our second constraint set ensures that the orientations of nodes are unambiguous during the entire limit cycle and a linkage structure generates a unique end-effector trajectory. This is denoted as \TE{AreaConstraint}.} According to \prettyref{eq:forward}, there are two possible positions for a node $\NN_i$ corresponding to the sign of off-diagonal terms in the rotation matrix $\E{R}$. To ensure that the sign of the two off-diagonal terms never changes, it is enough to bound their values away from zero. Equivalently, we can bound the cosine value away from $1$ by a small margin denoted as $\epsilon$, i.e., $\cos\leq1-\epsilon$.  After some rearrangement, we derive the following equivalence:
\begin{align*}
&|\|\NN_j(t)-\NN_k(t)\|-l_{ji}|^2\leq l_{ki}^2-2\epsilon\|\NN_j(t)-\NN_k(t)\|l_{ji}.
\end{align*}
The above constraint is not a quadratic form, but an equivalent form exists by observing that $\cos=1$ if and only if the area of the triangle formed by $\NN_i,\NN_j,\NN_k$ has zero area. Therefore, we can bound the signed area away from zero as follows:
\begin{equation}
\begin{aligned}
\label{eq:area}
\left<\left[\E{d}_{1i}(t)\right]^\perp,\E{d}_{2i}(t)\right>
\geq\epsilon,
\end{aligned}
\end{equation}
where the $\perp$ superscript denotes the vector rotated by $90$ degrees clockwise. Note that the sign of the area does not matter because they correspond to two binary variable assignments $C_{ji}^1=C_{ki}^2=1$ and $C_{ki}^1=C_{ji}^2=1$, and our optimizer is free to choose one of the two cases. Additionally, the area constraint on the triangle between $\NN_i,\NN_j,\NN_k$ should only be activated when one of the two binary variable assignments happens, but we do not need to consider this issue here as it has been done in the first constraint of \prettyref{eq:length}.

\refined{We introduce the last set of constraints, denoted as \TE{MotorConstraint}, to specify the rotational motion of the first node $\NN_1$.} The motion specified by \prettyref{eq:motor} must be replaced with a quadratic form to be consumed by MIQCQP. Note that the motor can rotate clockwise or counter-clockwise. Therefore, we introduce a binary variable $D$ to distinguish between these two cases. Our constraint set is then formulated as:
\begin{equation}
\begin{aligned}
\label{eq:order}
&\|\E{R}(t)\E{d}_{d1}(t)-\E{d}_{d1}(0)\| \leq D2\sqrt{2}B   \\
&\|\E{R}(-t)\E{d}_{d1}(t)-\E{d}_{d1}(0)\| \leq (1-D)2\sqrt{2}B  \\
&\E{d}_{d1}(t)=\NN_1(t)-\TWO{X_C}{Y_C},\\
\end{aligned}
\end{equation}
where we have used the same upper bound for the big-M method. In summary, our geometric constraints involve:
\begin{equation*}
\begin{aligned}
&\text{Geometric Constraints}\triangleq\\
&\begin{cases}
\prettyreft{eq:length}: \text{RealizabilityConstraint}\\
\prettyreft{eq:area}: \text{AreaConstraint}\\
\prettyreft{eq:order}: \text{MotorConstraint}\\
\end{cases}.
\end{aligned}
\end{equation*}
\refined{We formalized the following result which shows that our linkage structure is bounded away from singularities under geometric constraints:}
\begin{lemma}
\label{lem:nonsingular}
\refined{If \prettyreft{eq:length}, \prettyreft{eq:area}, and \prettyreft{eq:order} are satisfied, then for any $t$, the linkage structure has no singular configurations of any types \citep{56660}, and the end-effector trajectory is unique.}
\end{lemma}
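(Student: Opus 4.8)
The plan is to reduce the statement to a single geometric fact: under the law-of-cosine recursion of \prettyref{eq:forward}, the only obstruction to a smooth, uniquely determined forward kinematics is the degeneration of one of the dyad triangles, and \prettyref{eq:area} forbids exactly this. First I would use \prettyref{eq:length} to interpret the slack variables: when $C_{ji}^d=1$ the big-M constraint forces $\E{d}_{1i}(t)=\NN_i(t)-\NN_{j_1}(t)$ and $\E{d}_{2i}(t)=\NN_i(t)-\NN_{j_2}(t)$, the two link vectors emanating from $\NN_i$ toward its two parents $\NN_{j_1},\NN_{j_2}$, which exist and are distinct by \prettyref{ass:1} and \prettyref{eq:connectivity}. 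Then $\langle[\E{d}_{1i}(t)]^\perp,\E{d}_{2i}(t)\rangle$ equals, up to the orientation convention of the $\perp$ operator, twice the signed area of the triangle $\NN_i\NN_{j_1}\NN_{j_2}$, so \prettyref{eq:area} states precisely that every dyad triangle keeps signed area at least $\epsilon/2>0$ at every time $t$. In particular the triangle is non-degenerate, so (as already noted below \prettyref{eq:forward}) the cosine in \prettyref{eq:forward} is bounded away from $\pm1$, and the off-diagonal entries $\pm\sqrt{1-\cos^2}$ of $\E{R}$ stay bounded away from zero.

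For the absence of singularities I would exploit the recursive structure enforced by \prettyref{ass:1}. Listing the nodes in topological order, the forward-kinematics map from the motor parameter to the node positions is a composition of the elementary maps \prettyref{eq:forward}, one per movable node, so its Jacobian is block lower-triangular in this ordering and its determinant factors as the product of the per-node dyad Jacobians $\partial\NN_i/\partial(\NN_{j_1},\NN_{j_2})$. Differentiating \prettyref{eq:forward} shows that this elementary Jacobian loses rank if and only if $\sqrt{1-\cos^2}=0$, i.e. the triangle $\NN_i\NN_{j_1}\NN_{j_2}$ degenerates, since this is the only place a $1/\sqrt{1-\cos^2}$-type singularity can arise. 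Step one rules out degeneracy at every node and every $t$, so each diagonal block, and hence the global Jacobian, is nonsingular. Matching this to the classification of \citep{56660}: their Type-1 (inverse-kinematic), Type-2 (direct-kinematic), and Type-3 (combined) singularities each require a rank drop of one of these kinematic Jacobians, all of which we have excluded, so no singularity of any type occurs.

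For uniqueness of the end-effector trajectory I would argue by induction along the topological order that \prettyref{eq:motor}, \prettyref{eq:order}, and the fixed branch together pin down every $\NN_i(t)$. The base case is $\NN_1(t)$: with $D$ fixing the rotation sense and $X_C,Y_C,r$ fixed, \prettyref{eq:order} makes the motor trajectory a single unambiguous circle. For the inductive step, \prettyref{eq:forward} admits two solutions distinguished by the sign of the off-diagonal entry of $\E{R}$, i.e. by the sign of the signed triangle area; because \prettyref{eq:area} keeps this area $\geq\epsilon/2>0$ for all $t$, the sign never changes and exactly one continuous branch is consistent with the constraints. Hence each $\NN_i(t)$ is uniquely determined from its two parents, and in particular $\NN_N(t)$ is unique.

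The main obstacle, and where I would spend the most care, is the Jacobian argument of the second paragraph: making precise that the constraint Jacobian of the maximal-coordinate formulation becomes, after the reordering induced by \prettyref{ass:1}, block-triangular with diagonal blocks equal to the dyad Jacobians, and that each block's invertibility is exactly the non-collinearity (positive-area) condition. A secondary subtlety is translating between the three abstract singularity types of \citep{56660}, which are phrased through the input/output velocity Jacobians of a closed-loop mechanism, and our sequential-triangle (Assur-dyad) decomposition; I would show that for this linkage subclass every such singularity reduces to a dyad becoming collinear, which \prettyref{eq:area} prohibits. The remaining bookkeeping — that \prettyref{eq:length} genuinely identifies the slack variables with the true link vectors on the active branch and that the two parents are distinct — is routine given \prettyref{eq:connectivity}.
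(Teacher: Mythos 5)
Your core idea --- use the topological ordering of \prettyref{ass:1} to obtain a block-triangular Jacobian whose diagonal blocks are controlled by the area constraint \prettyref{eq:area}, then conclude nonsingularity and a unique continuous branch --- is the same as the paper's, and your reading of \prettyref{eq:length} (slack variables equal the true link vectors on active links) and of the area bound is correct, as is your induction for uniqueness. However, two steps in your second paragraph do not survive scrutiny as written. First, the objects you propose to take determinants of do not typecheck: the forward-kinematics map from the motor parameter $t$ to the node positions has a $2\bar{N}\times 1$ Jacobian (a single column), and the per-dyad Jacobians $\FPP{\NN_i}{(\NN_j,\NN_k)}$ of the explicit map \prettyref{eq:forward} are $2\times 4$, so neither has a determinant, and ``the determinant factors as the product of the per-node dyad Jacobians'' is not a meaningful statement. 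The paper avoids this by staying in maximal coordinates: it collects the motor constraint of \prettyref{eq:order} and the two squared-length constraints per dyad into an implicit system $\E{F}(t,\NN)=0$ and analyzes the \emph{square} matrix $\FPPR{\E{F}}{\NN}$, which, after reordering by \prettyref{ass:1}, is block triangular with $2\times2$ diagonal blocks $\MTT{2(X_i-X_j)}{2(Y_i-Y_j)}{2(X_i-X_k)}{2(Y_i-Y_k)}$ whose determinants are, up to sign and a constant factor, exactly the quantities bounded below by \prettyref{eq:area}, giving $\det(\FPPR{\E{F}}{\NN})\geq(4\epsilon)^{\bar{N}-1}>0$. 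You relegate precisely this construction to ``routine bookkeeping'' in your final paragraph, but it is the actual argument; the composition-of-explicit-maps version cannot substitute for it.

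Second, and more substantively, your claim that ``every such singularity reduces to a dyad becoming collinear'' is false for the first singularity type of \citep{56660}. In the velocity relation $\FPP{\E{F}}{t}\dot{t}+\FPP{\E{F}}{\NN}\dot{\NN}=0$, type (i) (and part of type (iii)) concerns a rank drop of the input-side matrix $\FPPR{\E{F}}{t}$, which has nothing to do with the dyad triangles: in this linkage subclass it could only occur at the motor (e.g., a degenerate rotation radius), never at a dyad. Your area argument controls only the output-side matrix $\FPPR{\E{F}}{\NN}$, i.e.\ type (ii), so as written the proposal does not exclude ``singular configurations of any types.'' The paper closes this case by observing that only the two motor rows of $\E{F}$ depend on $t$, so $\FPPR{\E{F}}{t}$ is a single non-zero column, which rules out types (i) and (iii); your proof needs this additional (short) step rather than the claimed reduction to collinearity.
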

\begin{proof}
\refined{The forward kinematics of our linkage can be computed by solving the implicit equations:
\begin{align*}
\E{R}(\pm t)\E{d}_{d1}(t)-\E{d}_{d1}(0)=0&\\
\begin{rcases*}
\|\NN_i-\NN_j\|^2-l_{ji}^2=0\\
\|\NN_i-\NN_k\|^2-l_{ki}^2=0
\end{rcases*}&\forall\CNN{i}{j}{k},
\end{align*}
all the equations of which hold by \prettyreft{eq:length} and \prettyreft{eq:order}. We can summarize the above equations as a vector-valued implicit function $\E{F}(t,\NN)=0$, where $\NN$ is a concatentation of movable node positions. Only the first two rows of $\E{F}$ correspond to the motor node $\NN_1$ and each non-motor, movable node $\CNN{i}{j}{k}$ occupies two additional rows. The sensitivity analysis leads to:
\begin{align*}
\FPP{\E{F}}{t}\dot{t}+\FPP{\E{F}}{\NN}\dot{\NN}=0.
\end{align*}
Since only the first two rows are functions of $t$, we immediately have $\FPPR{\E{F}}{t}$ is a single non-zero column, which implies that the linkage does not have singular configuration of type (i) or (iii) \citep{56660}. Next, we focus on $\FPPR{\E{F}}{\NN}$, which is a square matrix. This is because each (motor or non-motor), movable node will introduce two rows and two columns. We first order the rows and columns of $\E{F}$ in the topological ordering as implied by \prettyref{ass:1}, then $\FPPR{\E{F}}{\NN}$ becomes a $2\times2$ block upper-triangular matrix of the following type:
\begin{align*}
\FPP{\E{F}}{\NN}=\MTT
{\ddots}{\E{0}}{\ldots}
{\MTT
{2(X_i-X_j)}{2(Y_i-Y_j)}
{2(X_i-X_k)}{2(Y_i-Y_k)}},
\end{align*}
from which we immediately have $\det(\FPPR{\E{F}}{\NN})\geq (4\epsilon)^{\bar{N}-1}>0$ by \prettyreft{eq:area} and the linkage does not have singular configuration of type (ii) \citep{56660}, where $\bar{N}$ denotes the number of movable nodes. Since $\det(\bullet)$ is a continuous function, the linkage structure is bounded away from singularity and the end-effector trajectory is unique.\hfill\qedsymbol}
\end{proof}
The geometric constraint set consists of an infinite number of quadratic constraints. Unfortunately, the two crucial constraints (i.e.: the equal-length constraints in \prettyref{eq:length} and \prettyref{eq:area}) are non-convex, so finding the feasible solution set for them is not tractable. In the next section, we propose methods to discretize and then approximate its optimal solution set. 
\section{Approximate MIQCQP Solver}
The above-mentioned MIQCQP involves infinitely many variables, intractable integral in the objective function, and non-convex constraints. To derive a finite-dimensional problem, we discretize our trajectory, as well as the user-specified target trajectory, by sampling $T$ nodes evenly at $t^q=2\pi q/T$ with $q=1,\cdots,T$. Under such discretization, our objective function can be approximated as:
\begin{align}
\label{eq:objectiveDiscrete}
&\int_0^{2\pi}\|\NN_N(t)-\NN_N^*(t)\|^2dt+\text{reg.}\nonumber\\
\approx&\frac{2\pi}{T}\sum_{q=1}^T\|\NN_N(t^q)-\NN_N^*(t^q)\|^2+\lambda\sum_{i=1}^NU_i,
\end{align}
where we have utilized the variable $U_i$ as our regularization term, which encourages the optimizer to use as few links as possible, and $\lambda$ is the weight coefficient that balances the exactness of the target curve matching and the simplicity of the linkage structure. Similarly, we can discretize the geometric constraint \prettyref{eq:length} as:
\begin{align}
\label{eq:lengthDiscrete1}
&\|\NN_i(t^q)-\NN_j(t^q)-\E{d}_{di}(t^q)\|
\leq(1-C_{ji}^d)2\sqrt{2}\\
\label{eq:lengthDiscrete2}
&\|\E{d}_{di}(t^q)\|^2=\|\E{d}_{di}(t^{q+1})\|^2\\
\label{eq:lengthDiscrete3}
&\|\NN_i(t^q)-\NN_i(t^{q+1})\|^2\leq(1-F_i)2\sqrt{2}B,
\end{align}
\prettyref{eq:area} as:
\begin{equation}
\begin{aligned}
\label{eq:areaDiscrete}
\left<\left[\E{d}_{1i}(t^q)\right]^\perp,\E{d}_{2i}(t^q)\right>
\geq\epsilon,
\end{aligned}
\end{equation}
and finally \prettyref{eq:order} as:
\begin{equation}
\begin{aligned}
\label{eq:orderDiscrete}
&\|\E{R}( t^q)\E{d}_{d1}(t^q)-\E{d}_{d1}(0)\| \leq D2\sqrt{2}B   \\
&\|\E{R}(-t^q)\E{d}_{d1}(t^q)-\E{d}_{d1}(0)\| \leq (1-D)2\sqrt{2}B    \\
&\E{d}_{d1}(t^q)=\NN_1^d(t^q)-\TWO{X_C}{Y_C}.
\end{aligned}
\end{equation}
In summary, our discrete MIQCQP takes the following form:
\begin{equation}
\begin{aligned}
\label{eq:MIQCQPDiscrete}
\argmin{}&\prettyreft{eq:objectiveDiscrete}\\
\argcon{}&\prettyreft{eq:state},\ref{eq:connectivity},\ref{eq:balance},\ref{eq:balance2},
\ref{eq:lengthDiscrete1},\ref{eq:lengthDiscrete2},\ref{eq:lengthDiscrete3},
\ref{eq:areaDiscrete},\ref{eq:orderDiscrete}.
\end{aligned}
\end{equation}
The constraint set of \prettyref{eq:MIQCQPDiscrete} is a subset of \prettyref{eq:MIQCQP}, which defines an outer approximation for the feasible set of \prettyref{eq:MIQCQP}. Specifically, a feasible solution for \prettyref{eq:MIQCQPDiscrete} might not be realizable or satisfy the area constraints in between two time samples, but such discretization error can be made arbitrarily close to zero as $T\to\infty$. Although \prettyref{eq:MIQCQPDiscrete} is a finite-dimensional problem, finding its global optima is still intractable due to the non-convex constraints. In this section, we propose three algorithms to approximate its solution. 

\subsection{Mixed-Integer Conic Programming}
Our first algorithm uses a convex outer approximation for each non-convex quadratic constraint and then uses off-the-shelf MICP solver such as \cite{gurobi} to find approximate solutions. To relax the non-convex constraint \prettyref{eq:lengthDiscrete2}, we borrow techniques from \cite{liberti2004reformulation}. We notice that the non-convex constraint can be written as a linear constraint as the square of the node coordinates $\left[X_{di}\right]^2$, $\left[Y_{di}\right]^2$:
\begin{align*}
&\|\E{d}_{di}(t^q)\|^2=\|\E{d}_{di}(t^{q+1})\|^2\Leftrightarrow\\
&\left[X_{di}\right]^2(t^q)    +\left[Y_{di}\right]^2(t^q)=
\left[X_{di}\right]^2(t^{q+1})+\left[X_{di}\right]^2(t^{q+1}),
\end{align*}
where we assume $\E{d}_{di}=\TWO{X_{di}}{Y_{di}}$. For any decision variable $\alpha$, \cite{liberti2004reformulation} proposed a technique to derive a piecewise linear upper bound of $\alpha^2$, denoted as $\alpha^2\leq\tilde\alpha$, and the approximation error ($\tilde\alpha-\alpha^2$) can be made arbitrarily small by using more pieces. As illustrated in \prettyref{fig:upperBound}, the upper bound is formed by evenly sampling $S+1$ points on the curve $\alpha^2$ and then connect the samples using straight line segments. If we know that $\alpha\in[-B/2,B/2]$, then the sample points are $\alpha_s=sB/S-B/2$, where $s=0,\cdots,S$. To constraint $\tilde\alpha$ to lie on the set of line segments, we use the following set of constraints:
\begin{equation}
\begin{aligned}
\label{eq:upper}
&\TWOC{\alpha}{\tilde{\alpha}}=\sum_{s=0}^S\lambda_s\TWOC{\alpha_s}{\alpha_s^2} \\
&\{\lambda_{0,\cdots,S}\}\in\SOSB
\quad\sum_{s=0}^S\lambda_s=1,
\end{aligned}
\end{equation}
where we have used auxiliary variables $\lambda_s$ that belong to the special ordered set of type 2 ($\SOSB$) \citep{vielma2011modeling}. $\SOSB$ requires that at most two of the variables in an ordered set with consecutive indices can take non-zero values. The $\SOSB$ constraint can be converted to a set of linear constraints and a $\SOSA$ constraint using \prettyref{alg:SOSB}. With the upper bound, we can approximate the non-convex constraint with two linear constraints:
\begin{equation}
\begin{aligned}
\label{eq:lengthRelaxed}
&\|\E{d}_{di}(t^q)\|^2\leq\tilde{X}_{di}(t^{q+1})+\tilde{Y}_{di}(t^{q+1})\\
&\|\E{d}_{di}(t^{q+1})\|^2\leq\tilde{X}_{di}(t^q)+\tilde{Y}_{di}(t^q).
\end{aligned}
\end{equation}
It can be shown that \prettyref{eq:lengthRelaxed} forms an outer approximation for the feasible region of the non-convex constraint and the approximation error diminishes as $S\to\infty$. A similar formulation has been used in \cite{dai2017global} to discretize the space of unit vectors. To formulate \prettyref{eq:lengthRelaxed}, we need $4TK$ upper bounds, each one introducing $\lceil\log S\rceil$ binary decision variables, so we need $4TK\lceil\log S\rceil$ binary variables altogether.
\begin{figure}[t]
\includegraphics[width=0.49\textwidth]{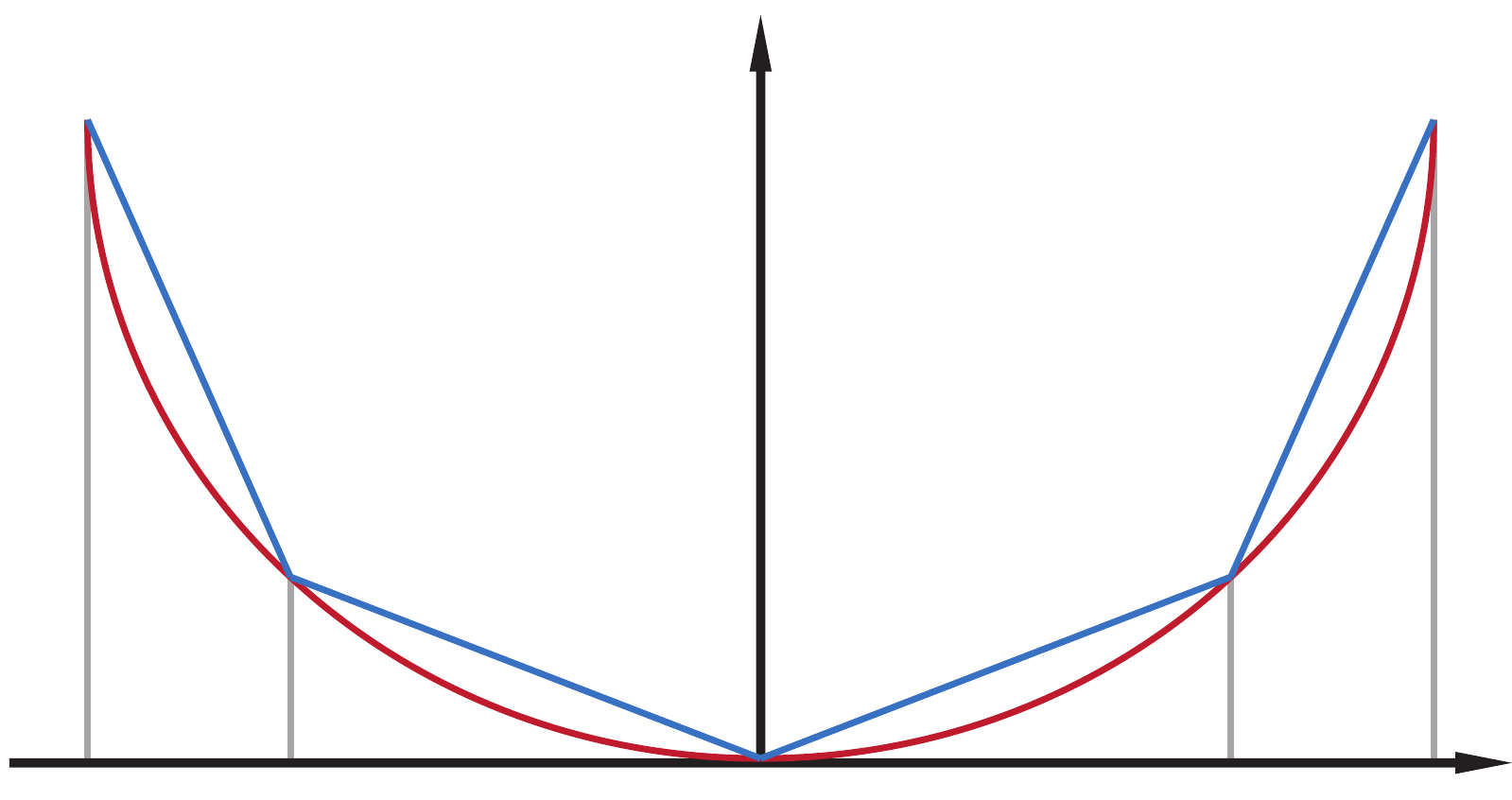}
\put(-230,-5){$\alpha_0$}
\put(-198,-5){$\alpha_1$}
\put(-120,-5){$\alpha_2$}
\put(-45,-5){$\alpha_3$}
\put(-13,-5){$\alpha_4$}
\caption{\label{fig:upperBound} An illustration of the piecewise linear upper bound (blue) of the quadratic curve $\alpha^2$ (red) with $S=4$.}
\end{figure}
\begin{algorithm}[ht]
\caption{Constraint $\{\lambda_0,\cdots,\lambda_S\}\in\SOSB$}
\label{alg:SOSB}
\begin{algorithmic}[1]
\Require{Auxiliary variables $\bar\lambda_0,\cdots,\bar\lambda_{S-1}\in[0,1]$}
\Require{$\bar\lambda_{-1}=\bar\lambda_{S}=0$}
\State Add constraint $\{\bar\lambda_0,\cdots,\bar\lambda_{S-1}\}\in\SOSA$\Comment{\prettyref{alg:SOSA}}
\For{$i=0,\cdots,S$}
\State Add constraint $\lambda_i\leq\bar\lambda_{i-1}+\bar\lambda_i$
\EndFor
\end{algorithmic}
\end{algorithm}

\begin{figure*}[ht]
\centering
\scalebox{0.9}{
\includegraphics[width=0.99\textwidth]{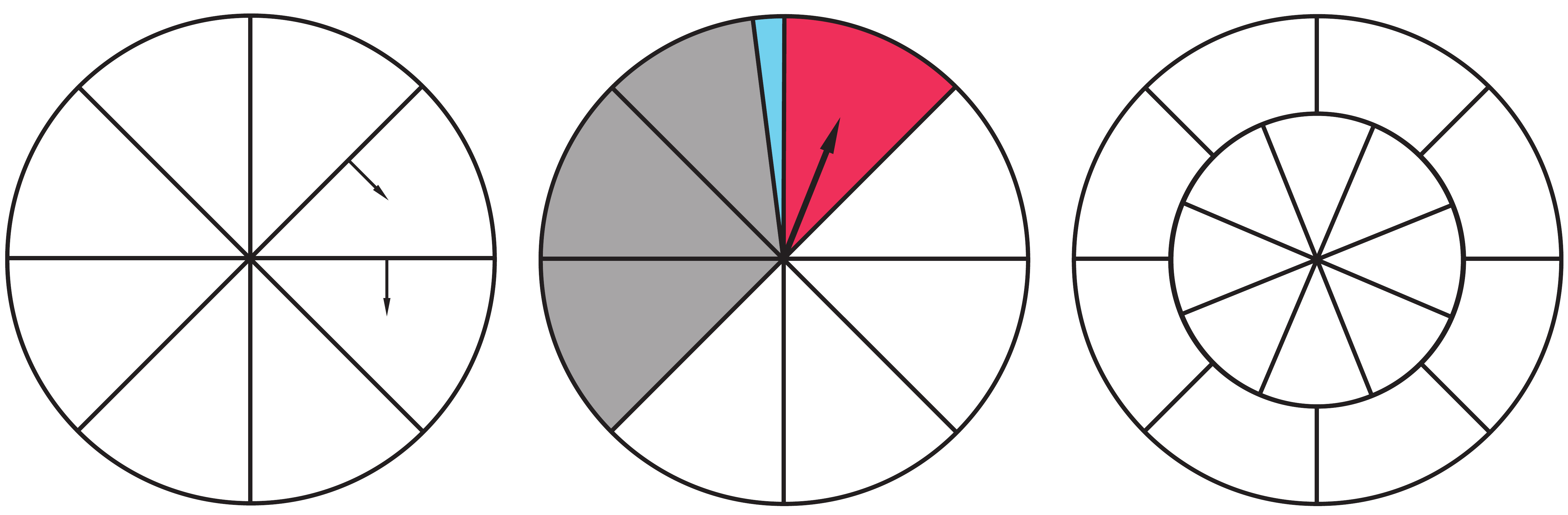}
\put(-385,90){$\E{v}_1^L$}
\put(-386,63){$\E{v}_1^R$}
\GAMMA{0}{417}{78}{60}{$\gamma_1$}
\GAMMA{1}{417}{78}{60}{$\gamma_2$}
\GAMMA{2}{417}{78}{60}{$\gamma_3$}
\GAMMA{3}{417}{78}{60}{$\gamma_4$}
\GAMMA{4}{417}{78}{60}{$\gamma_5$}
\GAMMA{5}{417}{78}{60}{$\gamma_6$}
\GAMMA{6}{417}{78}{60}{$\gamma_7$}
\GAMMA{7}{417}{78}{60}{$\gamma_8$}
\put(-360,10){(a)}
\put(-252,160){$\epsilon$}
\put(-225,125){$\E{d}_{1i}$}
\put(-300,95){$\E{d}_{2i}$}
\put(-190,10){(b)}
\GAMMA{0}{85}{78}{60}{$\gamma_1$}
\GAMMA{1}{85}{78}{60}{$\gamma_2$}
\GAMMA{2}{85}{78}{60}{$\gamma_3$}
\GAMMA{3}{85}{78}{60}{$\gamma_4$}
\GAMMA{4}{85}{78}{60}{$\gamma_5$}
\GAMMA{5}{85}{78}{60}{$\gamma_6$}
\GAMMA{6}{85}{78}{60}{$\gamma_7$}
\GAMMA{7}{85}{78}{60}{$\gamma_8$}
\GAMMAI{0}{85}{78}{60}{$\gamma_9$}
\GAMMAI{1}{85}{78}{60}{$\gamma_{10}$}
\GAMMAI{2}{85}{78}{60}{$\gamma_{11}$}
\GAMMAI{3}{85}{78}{60}{$\gamma_{12}$}
\GAMMAI{4}{85}{78}{60}{$\gamma_{13}$}
\GAMMAI{5}{85}{78}{60}{$\gamma_{14}$}
\GAMMAI{6}{85}{78}{60}{$\gamma_{15}$}
\GAMMAI{7}{85}{78}{60}{$\gamma_{16}$}
\put(-20 ,10){(c)}}
\caption{\label{fig:angleConstraint} Linear relaxation of angle constraints. (a): We cut $\mathcal{SO}(2)$ into 8 sectors, each of which is selected by a $\gamma$-flag. The sector selected by $\gamma_1$ is bounded by its left/right normal vectors $\E{v}_1^L$/$\E{v}_1^R$. (b): If $\E{d}_{1i}$ falls in the red area, then we restrict $\E{d}_{2i}$ to the gray area, which is at least $\epsilon$-apart (blue). However, when $\E{d}_{1i}$ moves across sector boundaries, the gray area jumps discontinuously. (c): To avoid discontinuous changes for $\E{d}_{2i}$ when $\E{d}_{1i}$ undergoes continuous changes, we propose to double cover $\mathcal{SO}(2)$.}
\end{figure*}

We adopt a similar technique to relax the area constraints \prettyref{eq:areaDiscrete}. These constraints involve two bilinear terms:
\begin{align*}
&\left<\left[\E{d}_{1i}(t^q)\right]^\perp,\E{d}_{2i}(t^q)\right>\geq\epsilon\Leftrightarrow\\
&X_{1i}(t^q)Y_{2i}(t^q)-Y_{1i}(t^q)X_{2i}(t^q)\geq\epsilon,
\end{align*}
which contributes to the non-convexity. A standardized technique to relax bilinear constraints is the McCormick envelop (see \cite{liberti2004reformulation} for more details), where the feasible domain is outer-approximated as a union of convex hulls. However, unlike for length constraints, we argue that inner approximations should be used for area constraints. This is because the parameter $\epsilon$ is a small constant and the McCormick envelope would also introduce relaxation errors with a larger magnitude than $\epsilon$, so even if the area constraints after the McCormick relaxation are satisfied, the exact constraints are still violated, rendering the relaxation useless. Instead, we propose an inner-approximation scheme. We cut the 2D rotation group $\mathcal{SO}(2)$ into $S$ sectors, as illustrated in \prettyref{fig:angleConstraint} (a), so that $\E{d}_{1i}$ will only fall into one of the $S$ sectors. If $\E{d}_{1i}$ falls in a particular sector, then we restrict $\E{d}_{2i}$ to its left half-space that is at least $\epsilon$ degrees apart, as shown in \prettyref{fig:angleConstraint} (b). If we use an $\SOSA$ constraint to select the sector in which $\E{d}_{1i}$ falls, then only $\BO(TK\lceil\log S\rceil)$ binary decision variables are needed. A minor issue with this formulation is that the allowed region of $\E{d}_{2i}$ jumps discontinuously as $\E{d}_{1i}$ changes continuously. We can fix this problem by double-covering the region of $\mathcal{SO}(2)$ by using $2S$ sectors, as shown in \prettyref{fig:angleConstraint} (c). To formulate these constraints, we assume that each sector of $\mathcal{SO}(2)$ is flagged by a selector variable $\gamma_l$, which is bounded by its left/right unit-length plane-normal vectors $\E{v}_l^L$/$\E{v}_l^R$. Combined with the fact that constraints should only be satisfied for one particular sector and for only movable nodes, we have the following formulation:
\begin{equation}
\begin{aligned}
\label{eq:sectorFinal}
&<\E{v}_l^L,\E{d}_{1i}(t^q)>\geq \sqrt{2}B(\gamma_{l,i}(t^q)-1) \\
&<\E{v}_l^R,\E{d}_{1i}(t^q)>\leq \sqrt{2}B(1-\gamma_{l,i}(t^q)) \\
&<\E{R}(\epsilon)\E{v}_l^L,\E{d}_{2i}(t^q)>\leq \sqrt{2}B(1-\gamma_{l,i}(t^q))  \\
&<\E{R}(\pi)\E{v}_l^R,\E{d}_{2i}(t^q)>\geq\sqrt{2}B(\gamma_{l,i}(t^q)-1)  \\
&\{\gamma_{1,1i}(t^q),\cdots,\gamma_{2S,i}(t^q)\}\in\SOSA\quad
\sum_{l=1}^{2S}\gamma_{l,i}^d=1,
\end{aligned}
\end{equation}
where $TK\lceil\log(2S)\rceil$ binary variables are used. 

We conclude that MICP can solve the relaxed and discretized MIQCQP problem by using $\BO(K\lceil\log K\rceil+4TK\lceil\log S\rceil+TK\lceil\log(2S)\rceil)$ binary variables (by replacing \prettyref{eq:lengthDiscrete2} with \prettyref{eq:lengthRelaxed} and \prettyref{eq:areaDiscrete} with \prettyref{eq:sectorFinal}), where the first term is due to topological constraints, the second term due to relaxed length constraints, and the last term due to relaxed angle constraints. Furthermore, the error due to relaxation and discretization can be made arbitrarily small as $T,S\to\infty$. In practice, the solution of MICP does not satisfy the exact, non-convex constraints and we remedy the error by locally solving \prettyref{eq:MIQCQPDiscrete} while fixing all the binary variables using an NLP solver. 
\subsection{Mixed-Integer Nonlinear Programming}
Despite the theoretical advantage of MICP, its practical performance can be unacceptable due to an excessive number of binary variables. For example, if we only use a coarse discretization and relaxation with $K=7,T=10,S=8$, the number of binary variables is already $1064$, for which finding the exact global optima is impossible and we have to terminate the optimization early and return users the first few feasible solutions. To make things worse, the returned solution might be useless by not satisfying the exact non-convex constraints. 

We observe that it is both impossible and unnecessary to solve the relaxed problem to get the global optima, because even the global optima might not satisfy the exact constraints under a coarse relaxation. Instead, it is worthwhile to return a solution that is not globally optimal but satisfies the exact, non-convex constraints. Our second algorithm uses an off-the-shelf MINLP solver, such as \cite{byrd2006k}, to find a locally optimal solution. Similar to MICP, an MINLP solver is also based on the BB algorithm and constructs a search tree by branching on binary decision variables. For each node, however, we do not assume that the problem is convex and use SQP to find a locally optimal, feasible solution as described in \cite{exler2007trust}. The node will be pruned for further expansion if no feasible solution can be found or the objective function is larger than the incumbent. Since SQP does not guarantee to find the global optima or even a feasible solution when one exists, MINLP might prune a node that contains useful solutions. On the other hand, all the returned solutions are guaranteed to satisfy the exact non-convex constraints. 

\begin{figure*}[ht]
\centering
\scalebox{0.9}{\includegraphics[width=0.99\textwidth]{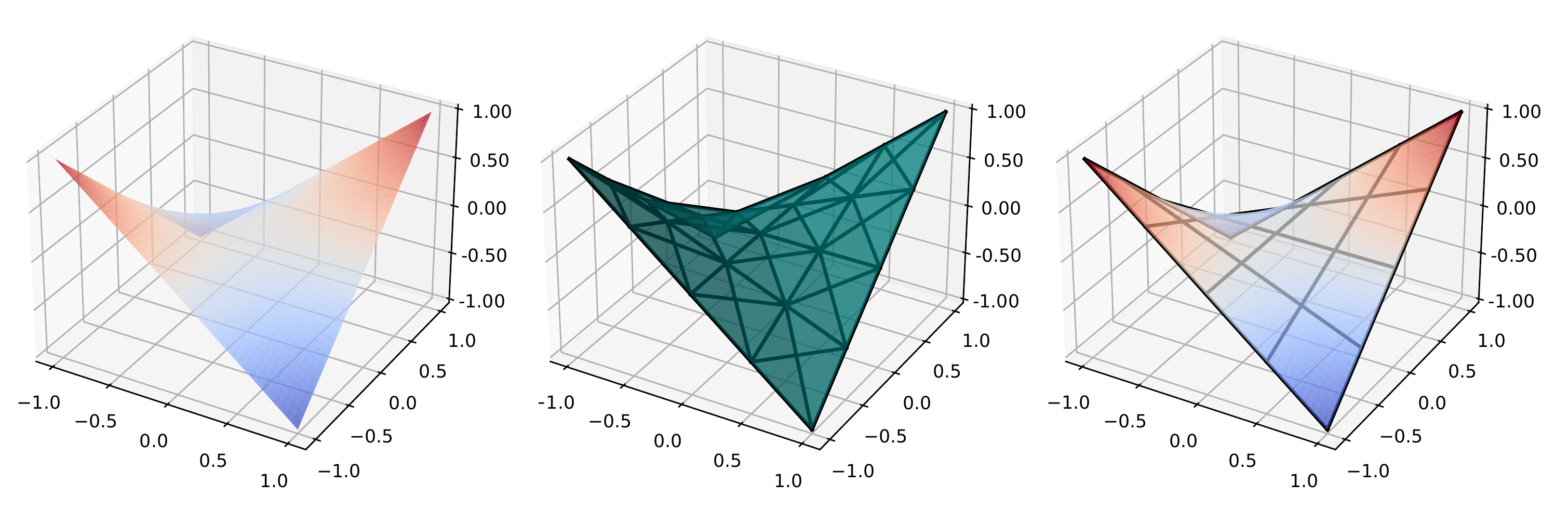}}
\put(-400,5){(a)}
\put(-255,5){(b)}
\put(-110,5){(c)}
\caption{\label{fig:envelopeCompare} Taking the bilinear constraint $z=xy$ as an example, we compare the three different relaxation techniques. (a): The original constraint of MIQCQP is non-convex; (b): MICP approximates the feasible domain as the union of convex tetrahedra (teal); (c): MIQCQP keeps the non-convex constraint, but we introduce additional relaxations to force the solver to try multiple initial values that lie in each black block.}
\end{figure*}
Although MINLP can directly handle \prettyref{eq:MIQCQPDiscrete} without any relaxation, we argue that a relaxation similar to MICP can also be used to force MINLP to try more initial points and increase the chance of finding better solutions. Take the equal length constraints for example, each $\E{d}_{di}\in[-B/2,B/2]^2$ by assumption, so we can evenly divide the domain into $S^2$ blocks. This can be done by introducing two sets of sample points $\alpha_s=sB/S-B/2$ and $\beta_s=sB/S-B/2$ and the following constraints:
\begin{equation}
\begin{aligned}
\label{eq:distanceRelaxation}
q=1: 
\begin{cases}
&\E{d}_{di}(t^q)=\TWO{\sum_{s=0}^S\lambda_s^X\alpha_s}{\sum_{s=0}^S\lambda_s^Y\beta_s}\\
&\{\lambda_{0,\cdots,S}^X\}\in\SOSB\quad\sum_{s=0}^S\lambda_s^X=1\\
&\{\lambda_{0,\cdots,S}^Y\}\in\SOSB\quad\sum_{s=0}^S\lambda_s^Y=1,
\end{cases}
\end{aligned}
\end{equation}
where $\lambda_s^{X,Y}$ are auxiliary variables. Adding these constraints will force the MINLP to insert nodes into the search tree corresponding to initializing $\E{d}_{di}$ in different blocks. In practice, we find that it is enough to relax only the equal length constraints and apply the relaxation only to the first timestep $q=1$. We conclude that MINLP can solve the discretized MIQCQP problem by using $\BO(K\lceil\log K\rceil+4K\lceil\log(S)\rceil)$ binary variables (by adding \prettyref{eq:distanceRelaxation} to \prettyref{eq:MIQCQPDiscrete}). This formulation uses much fewer binary variables (e.g., $98$ variables when $K=7,T=10,S=8$). Finally, we informally argue that if we apply relaxation to every $1\leq q\leq T$ and let $T,S\to\infty$, then MINLP will also find the global optima of MIQCQP because this is essentially asking MINLP to initialize from all possible solutions. 
\subsection{Simulated Annealing}
We introduce our third algorithm as a baseline for comparison, which is based on the SA framework, similar to \cite{Zhu:2012:MMT:2366145.2366146,cabrera2002optimal} but adapted to our subclass of planar linkages satisfying \prettyref{ass:1}. We use the SA algorithm described in \cite{bertsimas1993simulated} to minimize our discrete objective function (\prettyref{eq:objectiveDiscrete}). Starting from a trivial initial guess $\mathcal{L}_0$, SA generates a Markov chain by mutating $\mathcal{L}_i$ to $\mathcal{L}_{i+1}$ at the $i$th iteration and accept $\mathcal{L}_{i+1}$ with a probability proportional to the decrease in the objective function. Our initial guess $\mathcal{L}_0$ involves only one motor node $\NN_1$ with $r=B/10,X_C=Y_C=0$. 

\begin{algorithm}[ht]
\caption{SA-Mutation}
\label{alg:SAMutation}
\begin{algorithmic}[1]
\Require $\mathcal{L}_i$
\State Succ$\gets$False
\While{Not Succ}
\State Choose move type, $\mathcal{L}_{i+1}\gets\mathcal{L}_i$
\If{type=TopologicalAdditionMove}
\State IsFixed$\sim\mathcal{U}(0,1)$
\If{IsFixed$>0.5$}
\State Choose $\NN_{N+1}\sim\mathcal{U}(-B/2,B/2)^2$
\State $N\gets N+1$, Succ$\gets N\leq K$
\Else{}
\State Choose $1\leq j<k\leq N$
\State Choose $l_{j(N+1)},l_{k(N+1)}\sim\mathcal{U}(0,B)$
\State Add node $\CNN{N+1}{j}{k},N\gets N+1$
\small
\State Succ$\gets N\leq K\land$
Satisfied(\ref{eq:state},\ref{eq:connectivity},\ref{eq:balance},\ref{eq:balance2})
$\land$Valid($\mathcal{L}_{i+1}$)
\normalsize
\EndIf
\ElsIf{type=TopologicalSubtractionMove}
\State Remove $\NN_N, N\gets N-1$
\State Succ$\gets N\geq1\land$
Satisfied(\ref{eq:state},\ref{eq:connectivity},\ref{eq:balance},\ref{eq:balance2})
\ElsIf{type=GeometricPerturbationMove}
\State Choose $1\leq i\leq N, t\sim \mathcal{U}(0,2\pi)$
\State Choose $\Delta X,\Delta Y\sim\mathcal{N}(B,B^2)$
\State $\NN_i(t)\gets\NN_i(t)+\TWO{\Delta X}{\Delta Y}$
\State Succ$\gets$Valid($\mathcal{L}_{i+1}$)
\ElsIf{type=LocalOptimizationMove}
\LineComment{The last term computed using \prettyref{alg:forward}}
\tiny
\State $d\gets\frac{2\pi}{T}\sum_{q=1}^T
\FPP{\|\NN_N(t^q)-\NN_N^*(t^q)\|^2}{\NN_N(t^q)}
\FPP{\NN_N(t^q)}{\FIVE{l_{ji}}{l_{ki}}{X_C}{Y_C}{r}}$
\normalsize
\State $\alpha\gets$Armijo-Line-Search($d$)
\tiny
\State $\FIVE{l_{ji}}{l_{ki}}{X_C}{Y_C}{r}\gets
\FIVE{l_{ji}}{l_{ki}}{X_C}{Y_C}{r}+\alpha d$
\normalsize
\EndIf
\EndWhile
\State Return $\mathcal{L}_{i+1}$
\end{algorithmic}
\end{algorithm}
We propose a mutation scheme outlined in \prettyref{alg:SAMutation} that chooses one of four moves with equal probability, where we use the function Valid($\bullet$) to check whether the linkage structure has valid kinematics (i.e., $\cos\in[0,1]$ for all $t\in[0,2\pi]$). \TE{TopologicalAdditionMove} adds a new node that can either be a fixed node or a movable node. A movable node $\CNN{N+1}{j}{k}$ is added by randomly selecting two existing nodes $\NN_j$, $\NN_k$ with $1\leq j<k\leq N$ and then selecting two length parameters $l_{j(N+1)},l_{k(N+1)}\sim\mathcal{U}(0,B)$ with uniform distribution. A fixed node is uniformly randomly selected within $[-B/2,B/2]^2$. We allow a TopologicalAdditionMove to happen if the total number of nodes is less than $K$, all the topological constraints are satisfied (if a movable node is added), and the planar linkage has valid kinematics. Our second \TE{TopologicalSubtractionMove} simply removes the last node $\NN_N$ from $\mathcal{L}_i$. We allow a TopologicalSubtractionMove to happen if there is at least one motor node remaining and all the topological constraints are satisfied. We check the topological constraints using our MICP solver after each topological move. Our third \TE{GeometricPerturbationMove} would first randomly select a node $\CNN{i}{j}{k}$ and then perturb its position at any time instance by adding a Gaussian noise $\mathcal{N}(B,B^2)$. We allow a GeometricPerturbationMove as long as the linkage structure has valid kinematics. Finally, we introduce a novel \TE{LocalOptimizationMove} that locally minimizes \prettyref{eq:objectiveDiscrete} with respect to all the geometric parameters (link lengths, $X_C,Y_C,r$) by using a gradient-based method. This is a computationally costly move that involves gradient evaluation using \prettyref{alg:forward} and then we choose to only perform a single steepest descend step with an Armijo backtracing line-search to ensure the decrease of objective function. Some of these moves might be unsuccessful, in which case we keep selecting a new type of move until one is successful. 
\section{Results}
We implement our three algorithms using Python, where we use the Supporting Hyperplane Optimization Toolkit (SHOT) \citep{kronqvist2016extended} to solve both MICP and MINLP problems, with IPOPT \citep{wchter:DSP:2009:2089} being the low-level NLP solver. \refined{Since the low-level NLP problems are non-convex, the solution of MINLP is sensitive to their initial guesses. Specifically, we use IPOPT in two phases, where phase-I ignores the objective function and only tries to satisfy the constraints and phase-II takes the objective functions into account. For each node of the BB search tree, we use the solution of phase-I of the parent node as an initialization.} Our SA-baseline is implemented based on the algorithm described in \cite{bertsimas1993simulated}. All the experiments are performed on a desktop computer with a 10-core Intel Xeon(R) W-2155 CPU. We use all the 10 cores to explore multiple nodes of the BB search tree in parallel. In this section, we discuss and compare our three algorithms in terms of computational cost, optimality, additional user constraints, and robot integration. 

\begin{figure*}
\centering
\includegraphics[width=0.88\textwidth]{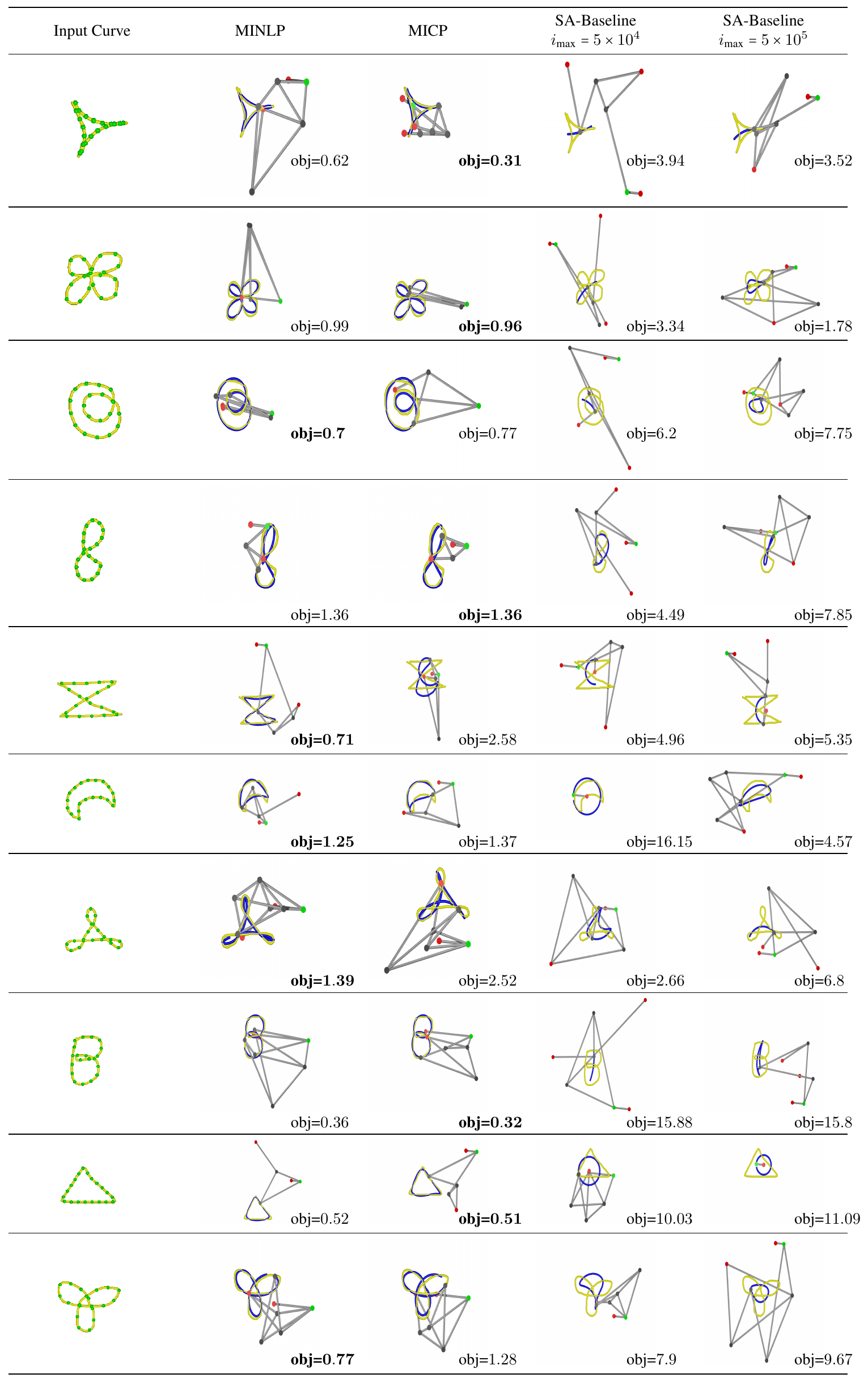}
\caption{\small{\label{table:results} We show 10 testing curves and the results optimized by our three algorithms and we evaluated two variants of SA-baseline with different iteration numbers. On the left, we show the user's input curves and $T=20$ sample points are drawn in yellow and green, respectively. For each optimized linkage structure, the fixed and movable nodes are shown in red and gray, respectively. The actual end-effector's curve is drawn in blue, and we mark the optimal objective function value on the lower-right corner.}}
\end{figure*}
\subsection{Computational Cost and Optimality}
We designed a graphical user interface that allows a robot designer to sketch a curve. We then close the curve and evenly sample $T$ points on the curve such that we have equal arc lengths between two consecutive points. In \prettyref{table:results}, we illustrate $10$ testing target curves and the resulting planar linkages found by MICP, MINLP, and the SA-baseline. To get these results, we set $K=7$ for all three algorithms. We use $T=10, S=8$ for MICP, $T=20, S=8$ for MINLP. The SA-baseline requires a cooling function for the temperature $\mathcal{T}$, for which we use:
\begin{align*}
\mathcal{T}=\mathcal{T}_\text{max}\exp
(-\log(\frac{\mathcal{T}_\text{max}}{\mathcal{T}_\text{min}})\frac{i}{i_\text{max}}).
\end{align*}
We set $\mathcal{T}_\text{max}=2.5\times10^4$ and $\mathcal{T}_\text{min}=2.5$, $i_\text{max}=5\times10^4$ where $i$ is the iteration number. Due to the limited computational resources and time, we allocate a maximum $24$ hours running time for each test and return the best solution. We tune the SA-baseline such that its computational cost is comparable to MICP or MINLP, for which running $5\times10^4$ iterations takes approximately $1$ hour and running $5\times10^5$ iterations takes roughly $10$ hours (one iteration corresponds to one execution of an SA-Mutation \prettyref{alg:SAMutation}). We set up the parameters for our three algorithms, so that their computational times are roughly comparable. According to \prettyref{table:results}, the SA-baseline cannot generate satisfactory results in terms of matching the target curve. We notice that this is not because of the parameter settings of the SA-baseline. Indeed, the results are still unsatisfactory if we increase the iterations number $10$ times, as shown in the rightmost column of \prettyref{table:results}. \refined{We further highlight that a larger $S$ could lead to better solutions. As illustrated in \prettyref{fig:LargeS}, we fix the parameters $K=5, T=20$, while comparing the objective function values under three choices: $S=1,4,8$. Using a larger $S$ reduce the objective function value for $10$ out of $11$ examples. In one of the example, the improvement can be as high as $78\%$.}

\begin{figure}[ht]
\centering
\includegraphics[height=0.3\textwidth]{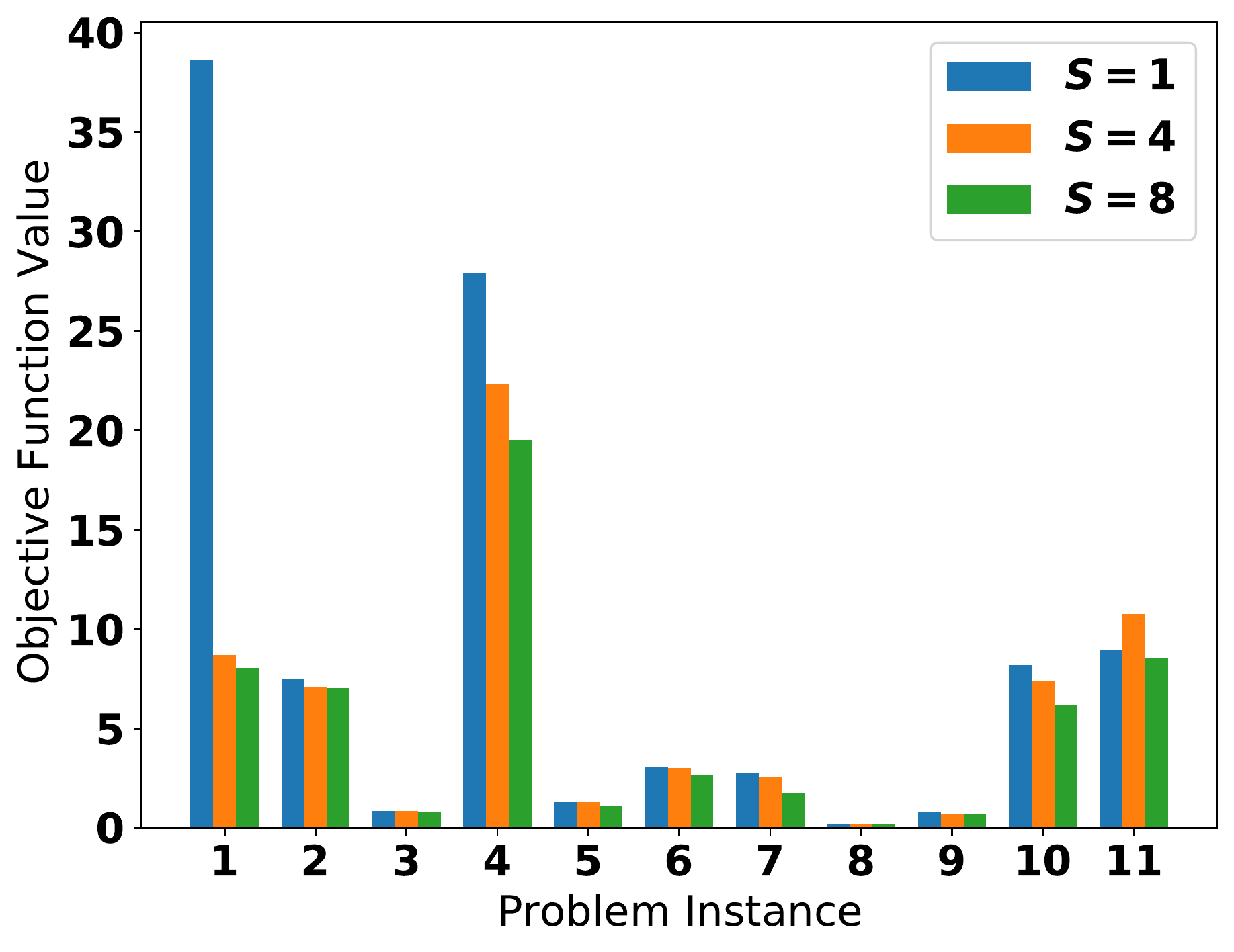}
\caption{\label{fig:LargeS} We compared the objective function values with three difference choices of $S=1,4,8$, while using $K=5$, $T=20$. Using a larger $S$ improves the solution for $10$ out of $11$ examples. In most examples, the improvement is minor, but the largest improvement can be as high as $78\%$.}
\vspace{-5px}
\end{figure}
We further analyze the computational cost and the convergence history of the MICP and MINLP algorithms as a function of $K$ and $S$, as shown in \prettyref{fig:costFunction} and \prettyref{fig:convergence}, respectively. We use both algorithms to solve ten benchmark problems under different parameter settings. As $K$ increases from $5$ to $7$ or $S$ increases from $4$ to $16$, the computational cost increases significantly for MICP. We found that $S$ needs to be at least $8$ because otherwise the relaxation is too coarse, rendering the solution of MICP nearly useless as it will not satisfy the non-convex constraints even after a local NLP solve as post-processing. The cost of a typical MICP solve is in the order of tens of hours and in many cases hits the $24$ hours limit, especially when we use large $S$ or $T$. MINLP has relatively faster performance, and typically accomplishes the computation within $5-10$ hours. The cost of MICP increases superlinearly with both $S$ and $T$, while the cost of MINLP increases superlinearly with only $S$ but not $T$. This is because our number of binary variables in MINLP does not increase with $T$. On the other hand, MINLP cannot match the target curve well in 2 out of 10 test cases from \prettyref{table:results}, while MICP always achieves an ideal match. We further notice from \prettyref{fig:convergence} that MICP explores orders of magnitude more nodes than MINLP in the search tree of the BB algorithm, which is understandable due to a much larger number of binary variables. However, MINLP uses more time to explore each node due to a higher cost in solving a non-convex problem for each node. We observe that both optimizers update the solution less than $10$ times before convergence and most of the computations are devoted to detecting and pruning impossible cases. We conclude that MINLP achieves an overall better computational efficacy than MICP at a minor sacrifice of optimality.
\begin{figure*}[ht]
\centering
\begin{tabular}{ccc}
\includegraphics[width=0.32\textwidth]{./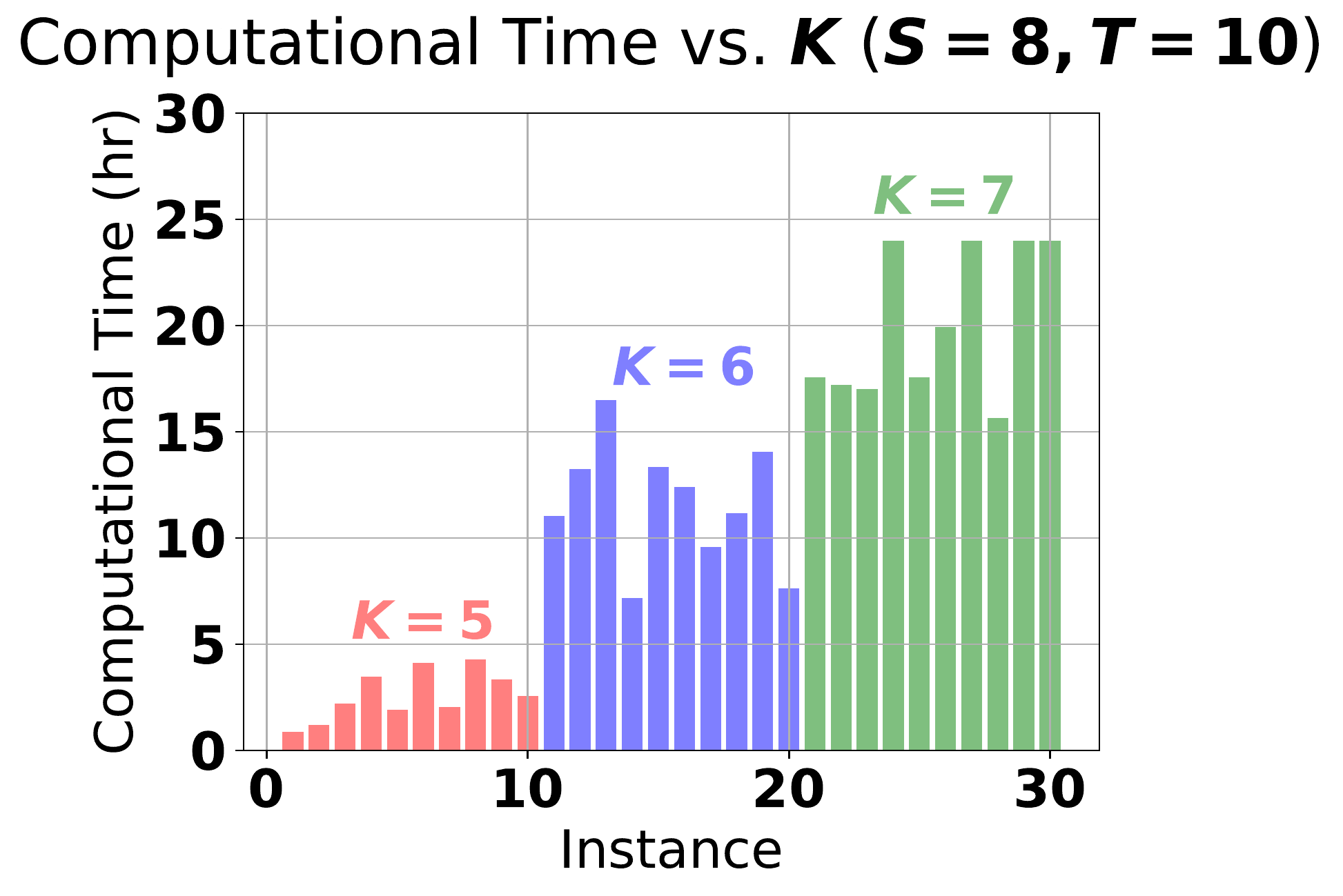}\put(-155,5){(a)}&
\includegraphics[width=0.32\textwidth]{./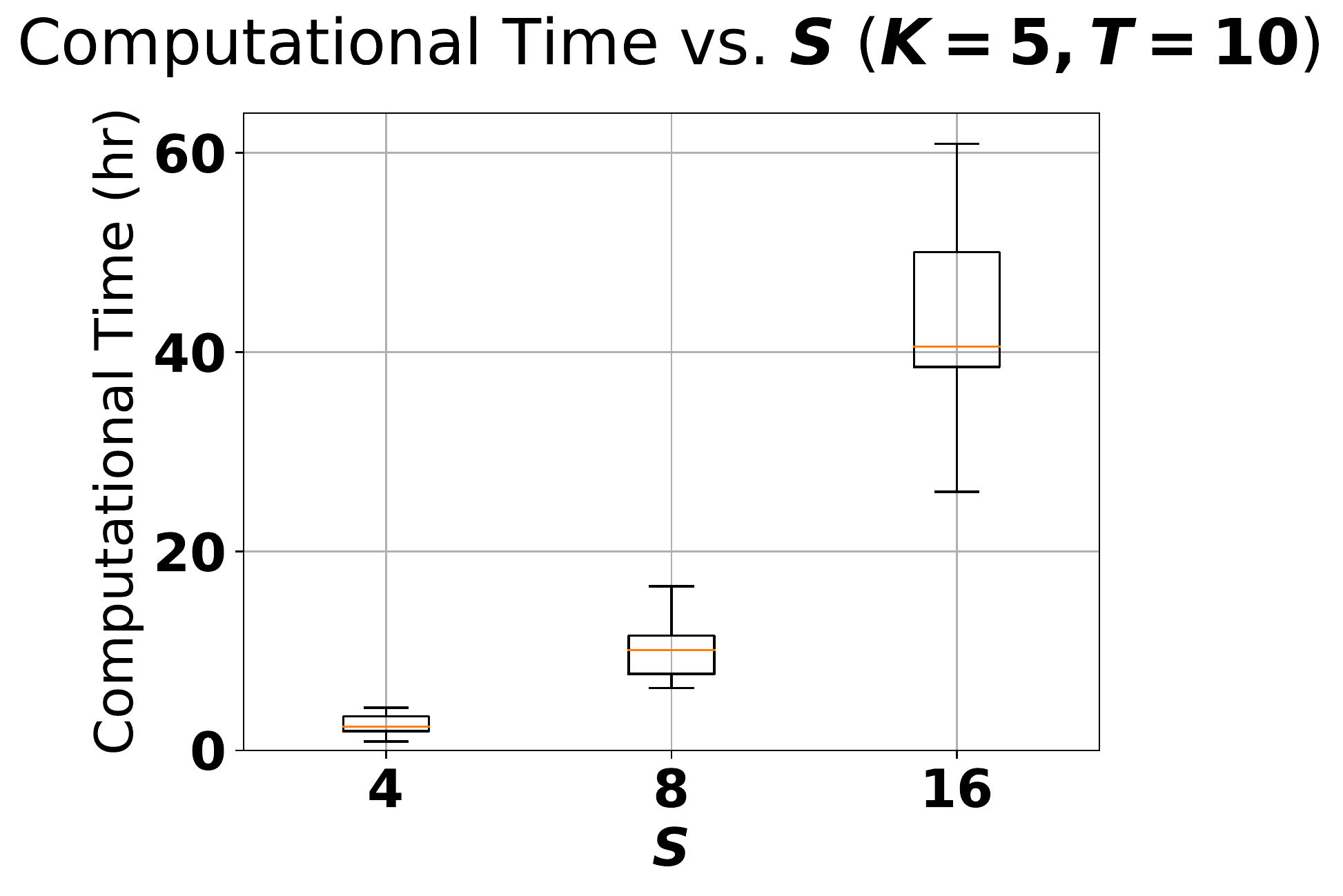}&
\includegraphics[width=0.32\textwidth]{./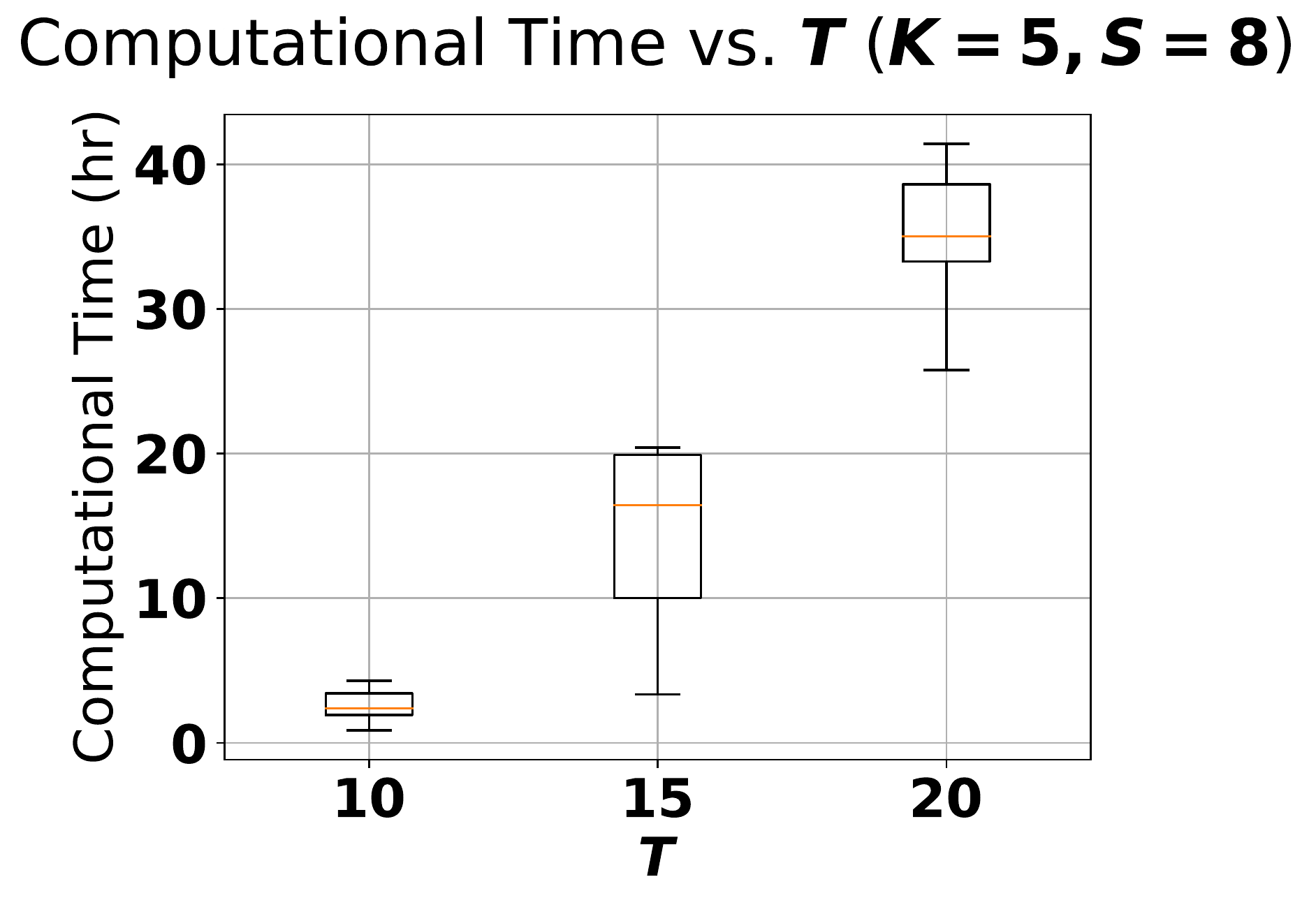}\\
\includegraphics[width=0.32\textwidth]{./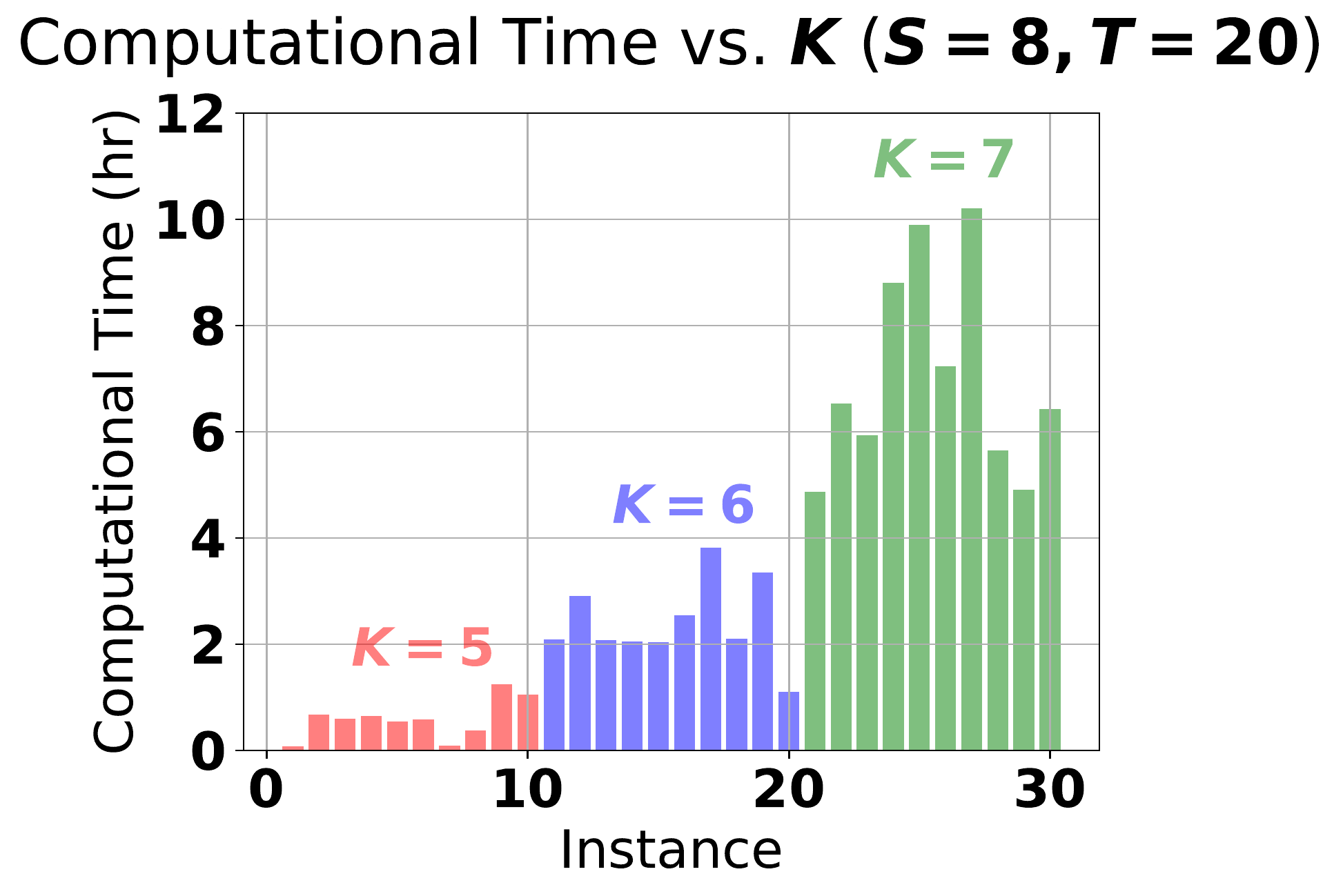}\put(-155,5){(b)}&
\includegraphics[width=0.32\textwidth]{./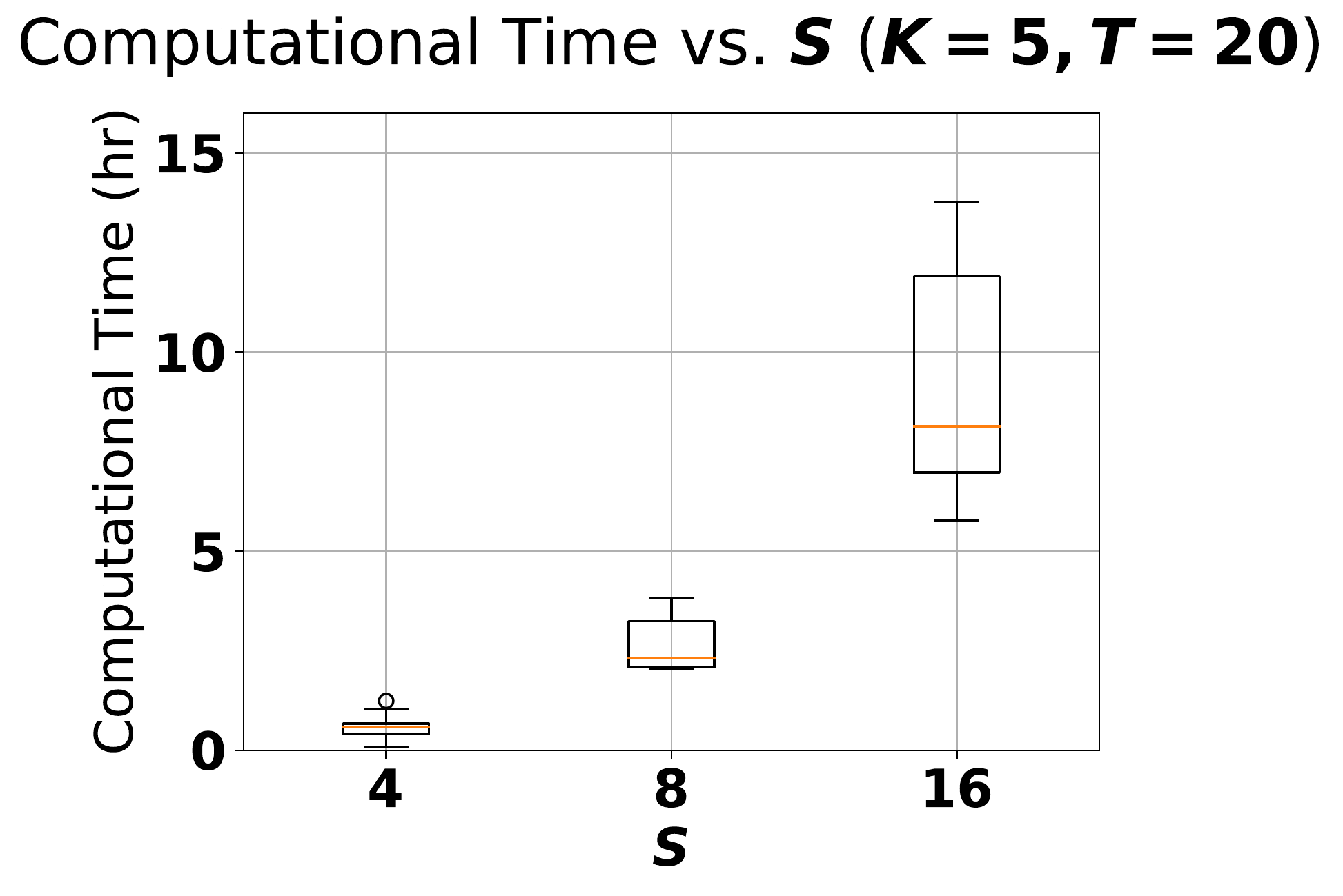}&
\includegraphics[width=0.32\textwidth]{./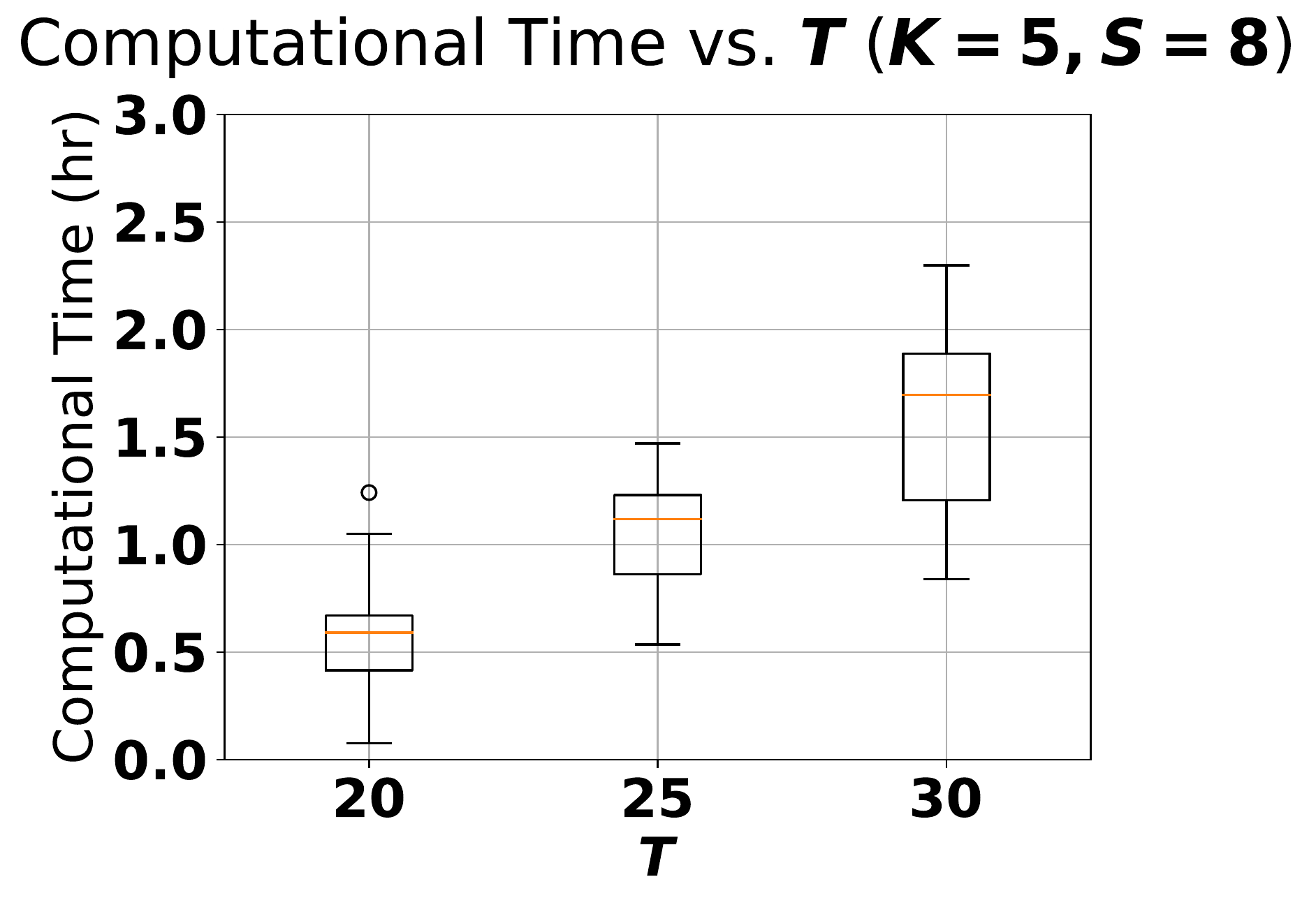}
\end{tabular}
\vspace{-5px}
\caption{\label{fig:costFunction} The cost of solving MICP (a) and MINLP (b) for 10 benchmark problems as a function of $K$, $S$, and $T$.}
\vspace{-5px}
\end{figure*}
\begin{figure}[ht]
\centering
\resizebox{.49\textwidth}{!}{\begin{tabular}{cc}
\includegraphics[height=0.24\textwidth]{./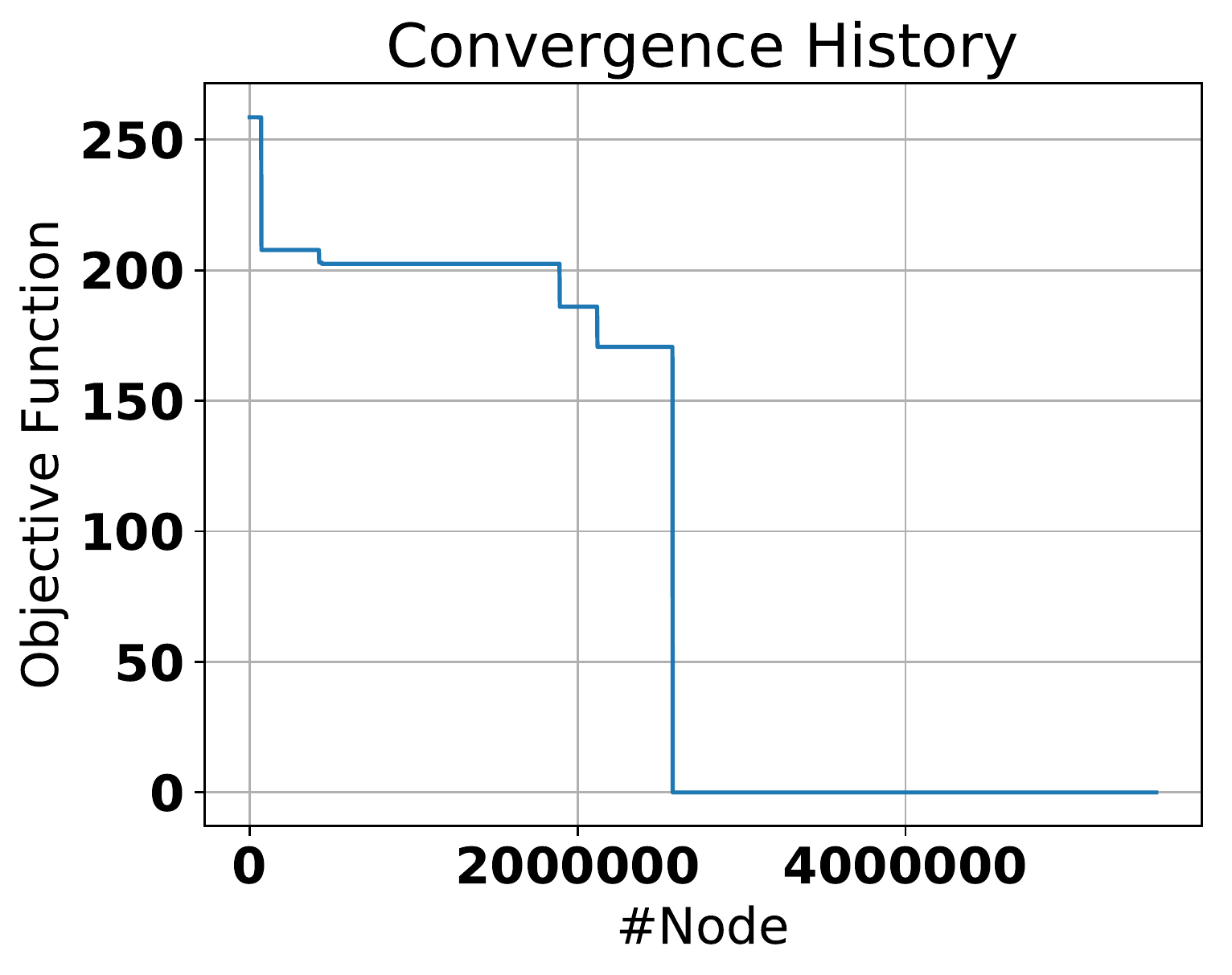}\put(-145,5){(a)}&
\includegraphics[height=0.24\textwidth]{./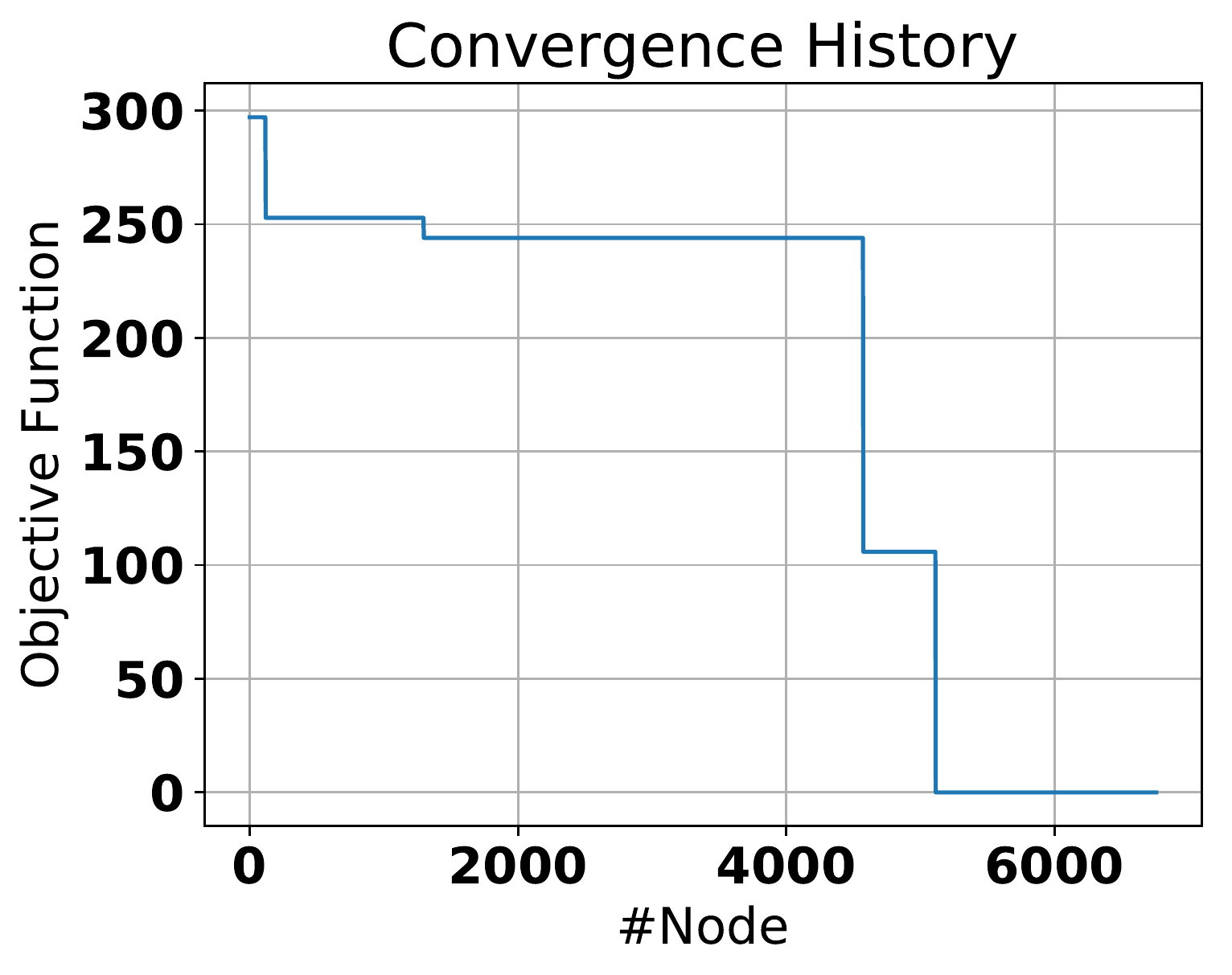}\put(-140,5){(b)}
\end{tabular}}
\caption{\label{fig:convergence} The convergence history of MICP (a) and MINLP (b) for the first problem in \prettyref{table:results}, plotted against the number of nodes explored on the BB search tree.}
\vspace{-5px}
\end{figure}

\subsection{Alternative User Interfaces}
\begin{figure*}[ht]
\centering
\begin{tabular}{cc}
\includegraphics[width=0.5\textwidth]{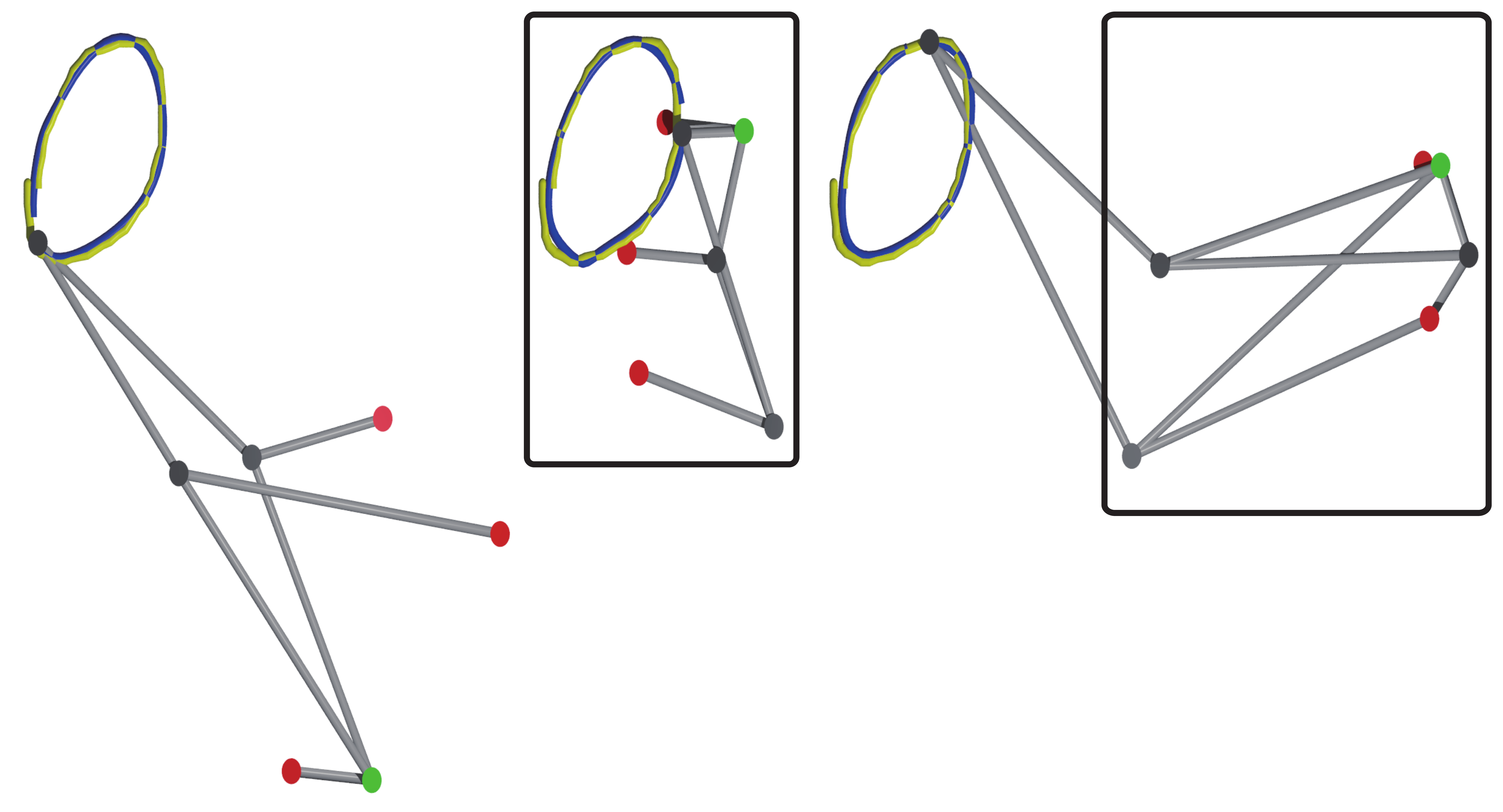}
\put(-238,105){(a)}
\put(-153,105){(b)}
\put(-105,105){(c)} &
\includegraphics[width=0.43\textwidth]{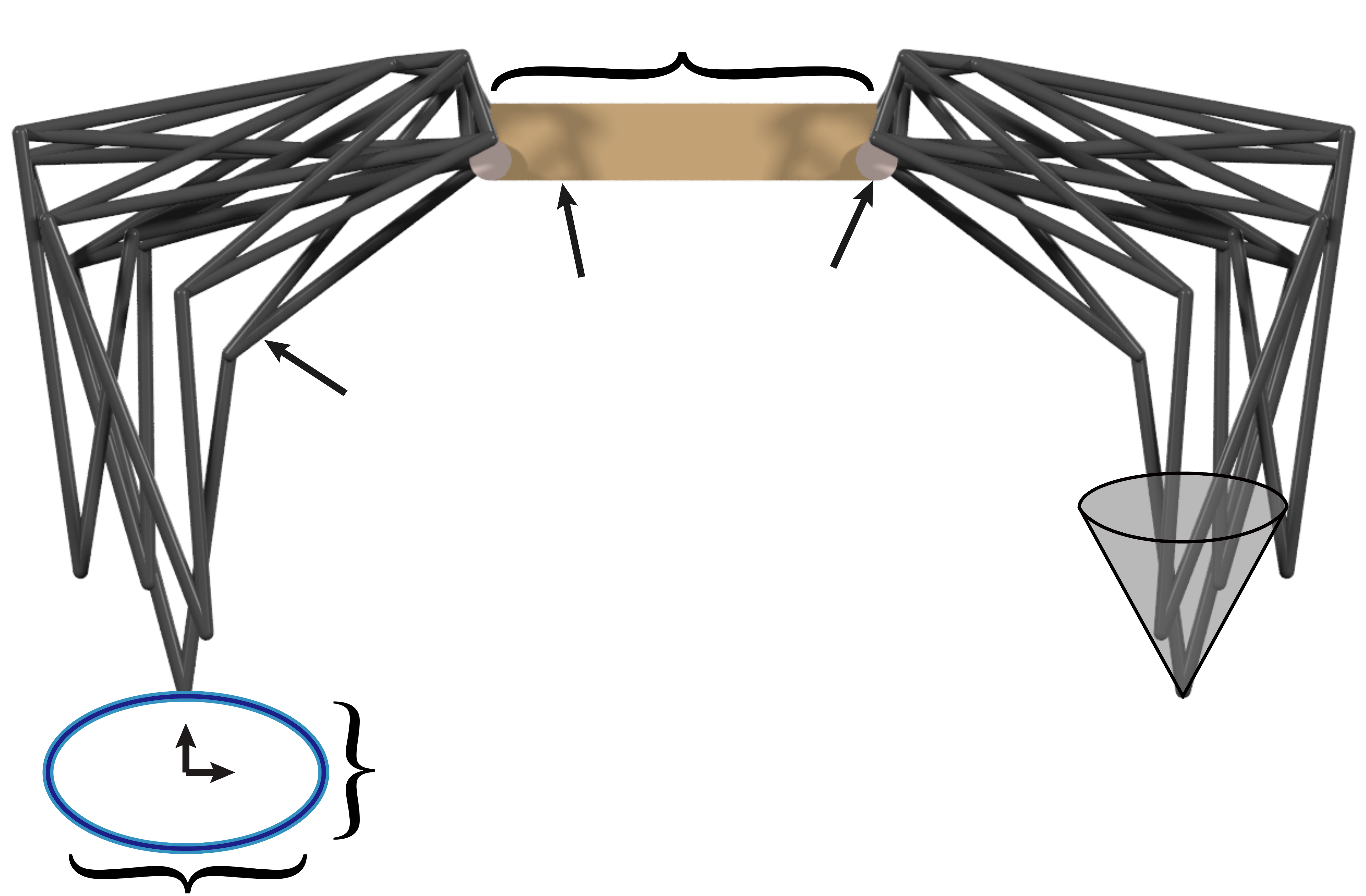}
\put(-178,15){\tiny$\Delta x$}
\put(-195,24){\tiny$\Delta y$}
\put(-185,-3){\tiny$s_x$}
\put(-154,18){\tiny$s_y$}
\put(-158,75){\tiny$\rho_l$}
\put(-125,92){\tiny$\rho_r$}
\put(-85,94){\tiny$\tau$}
\put(-90,87){\tiny{Motor Speed}}
\put(-109,135){\tiny$d_l$}
\put(-43,40){\tiny$\mu$}
\put(-90,30){(d)}
\end{tabular}
\caption{\label{fig:boxConstraint} We illustrate the effect of additional constraints on node positions. (a): no constraint; (b): box constraint that limits the entire structure in a small vicinity around the target curve; (c): box constraint that ensures that all the nodes (except for the end-effector) are a certain distance away from the target curve. (d): We mount the linkage structure (c) on a walking robot and optimize its walking distance with respect to the parameters: target curve translation $\Delta x,\Delta y(m)$, target curve scale $s_x,s_y(m)$, frictional coefficient $\mu$, robot mass density $\rho_r(kg/m^3)$, leg mass density $\rho_l(kg/m^3)$, motor torque $\tau(kgm^2/s^2)$, motor speed$(m/s)$, and body length $d_l(m)$.}
\end{figure*}
\begin{figure}[ht]
\centering
\vspace{-5px}
\includegraphics[width=0.45\textwidth]{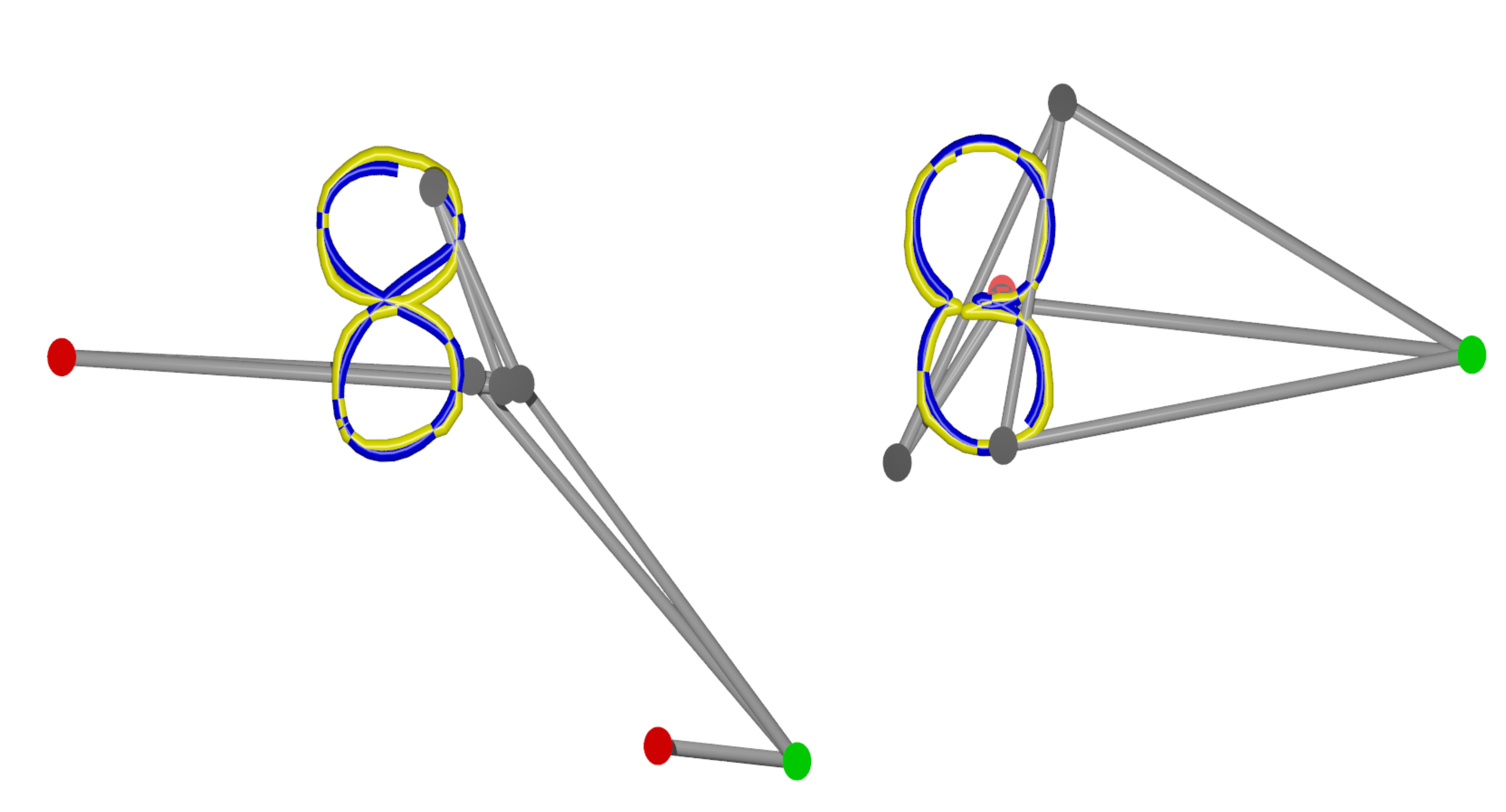}
\put(-185,70){(a)}
\put(-100,70){(b)}
\caption{\label{fig:orderIndependent} We illustrate the effect of allowing an arbitrary order to visit the set of points on the target 8-shaped curve. (a): the curve is genus-2 using fixed order; (b): the curve is genus-1 using arbitrary order.}
\vspace{-5px}
\end{figure}
\begin{wrapfigure}{r}{0.25\textwidth}
\centering
\includegraphics[width=0.24\textwidth]{./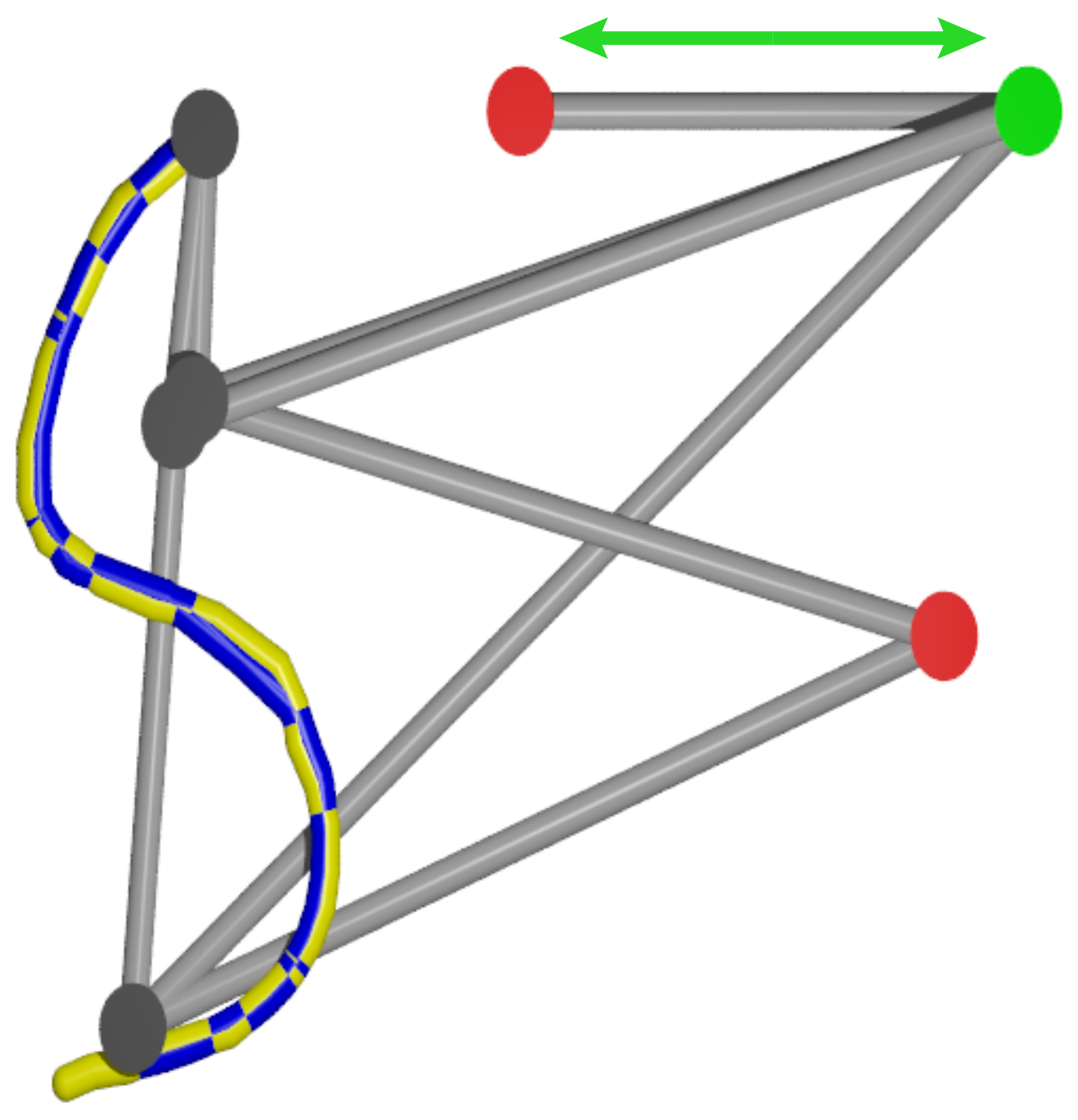}
\caption{\label{fig:linearMotion} \small{We illustrate a linkage structure that transforms linear motions into an S-shaped curve. The green node moves back and forth linearly along the green arrow.}}
\end{wrapfigure}
Our default user interface is for a designer to provide a target end-effector curve. However, other types of objective functions and hard constraints are possible. Our MIQCQP can take any non-convex objective function and constraints, while the MICP solver can take only convex objectives and constraints. In this section, we evaluate several additional user editing operations. We allow users to draw a box and constrain all the nodes (except for the end-effector node) to reside only in the given box, which can be formulated as four additional convex constraints. This is useful for a linkage structure to be mounted on a legged robot, where the non-end-effector nodes should be a certain distance away from the ground to avoid collisions in case of uneven terrains. Some mechanical toys have limited space in the gearbox and such constraints can be employed to fit the structure inside. In \prettyref{fig:boxConstraint}, we illustrate these two cases for the end-effector to trace out the same elliptical curve.

In addition, we allow users to draw a curve and optimize a linkage structure whose end-effector passes through the sampled points on the target curve with an arbitrary order, which is a typical case of coverage planning. For example, the order is unimportant for a planar linkage to hold a pen and fill out an area on a piece of paper. This requirement can be achieved by removing the first two equations in \prettyref{eq:orderDiscrete}, leaving only: $\E{d}_{11}^d(t^q)=\NN_1^d(t^q)-\TWO{X_C}{Y_C}$. We highlight such an example in \prettyref{fig:orderIndependent}, where the user provides an 8-shaped target curve. By default, the end-effector traces out a genus-2 curve, but it can also trace out a genus-1 curve to visit all the sampled points on the curve when the order is arbitrary. Finally, our formulation is not limited to rotational motors. In \prettyref{fig:linearMotion}, we illustrate a case with a linear motor, where the motor is moving according to: $\NN_1(t)=\TWO{X_C}{Y_C}+t\TWO{X_C'}{Y_C'}$. Such motion can be realized by replacing \prettyref{eq:orderDiscrete} with: $\NN_1(t^q)=\TWO{X_C}{Y_C}+t^q\TWO{X_C'}{Y_C'}$, where $X_C,Y_C,X_C',Y_C'$ are additional decision variables for the starting position and the moving direction. We speculate that several other motor types can be also realized by using similar techniques. Incorporating various motor types allows our formulation to be used in the modular design of mechanical systems, where the motion of the motor node is realized by another module.

\begin{figure*}[ht]
\centering
\includegraphics[width=0.9\textwidth]{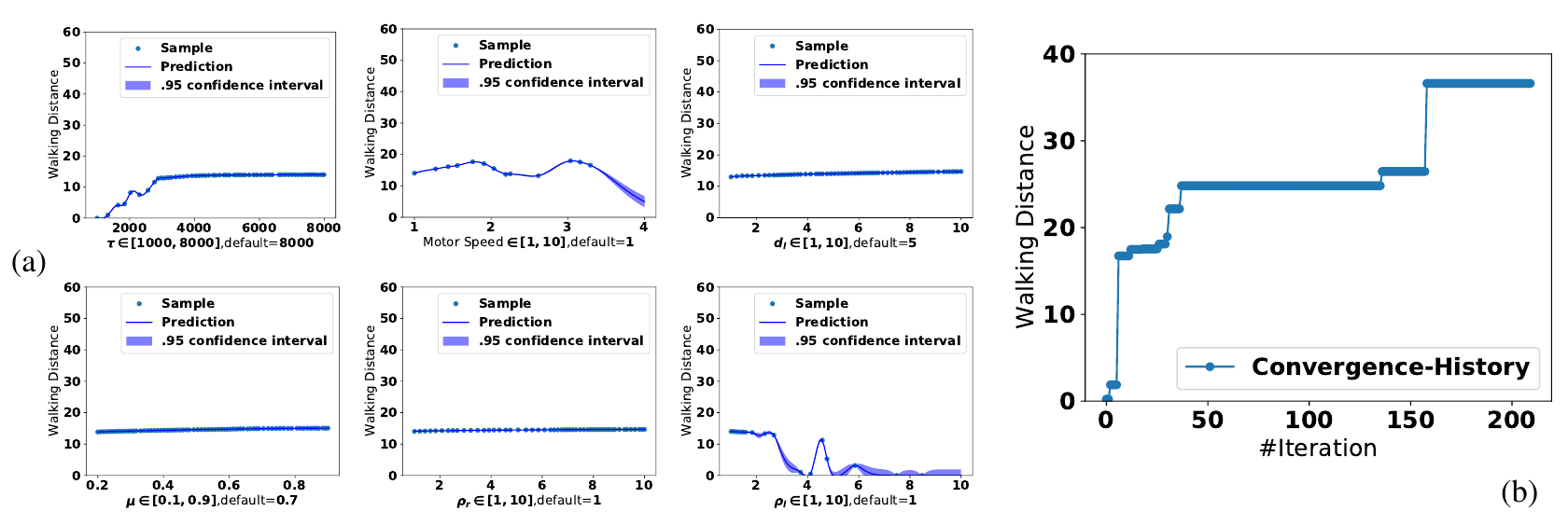}
\caption{\label{fig:Bayesian} (a): We optimize the robot's walking distance$(m)$ with respect to six parameters separately and we plot the objective landscape approximated by the Gaussian process. (b): We then optimize the three most important parameters: $\tau$, motor speed, and $\rho_l$ and plot the convergence history.}
\vspace{-5px}
\end{figure*}
\subsection{Robot Walking}
Prior works \citep{8793802,spielberg2017functional} have demonstrated that it is possible to design walking robots where linkage structures are used to transform rotational motion into loops of footsteps. Their end-effectors trace out an oval-shaped target curve, of which a well-known design is shown in \prettyref{fig:strandbeest}, \prettyref{fig:linkage}, and analyzed in \cite{nansai2013dynamic}. Although the above-mentioned work uses a manually designed linkage topology and geometry, they rely on an additional fine-tuning optimization to adjust the linkage's mounting points on the robot and geometric parameters. They show that such fine-tuning is essential to maximize the robot's performance, such as its walking speed.

\begin{figure}[ht]
\centering
\setlength{\tabcolsep}{0pt}
\begin{tabular}{c}
\resizebox{.45\textwidth}{!}{\begin{tabular}{cc}
\includegraphics[height=0.45\textwidth,trim=16cm 0cm 44cm 0cm,clip]
{tau.spd.dl.pickle.pdf}
\put(-210,120){$\rho_l$}
\put(-50 ,30){$\tau$}
\put(-195,30){Motor Speed}
\put(-95 ,210){(a): Walking Distance} &
\includegraphics[height=0.45\textwidth,trim=168cm 0cm 0cm 0cm,clip]
{tau.spd.dl.pickle.pdf}
\end{tabular}}\\
\includegraphics[width=0.45\textwidth]{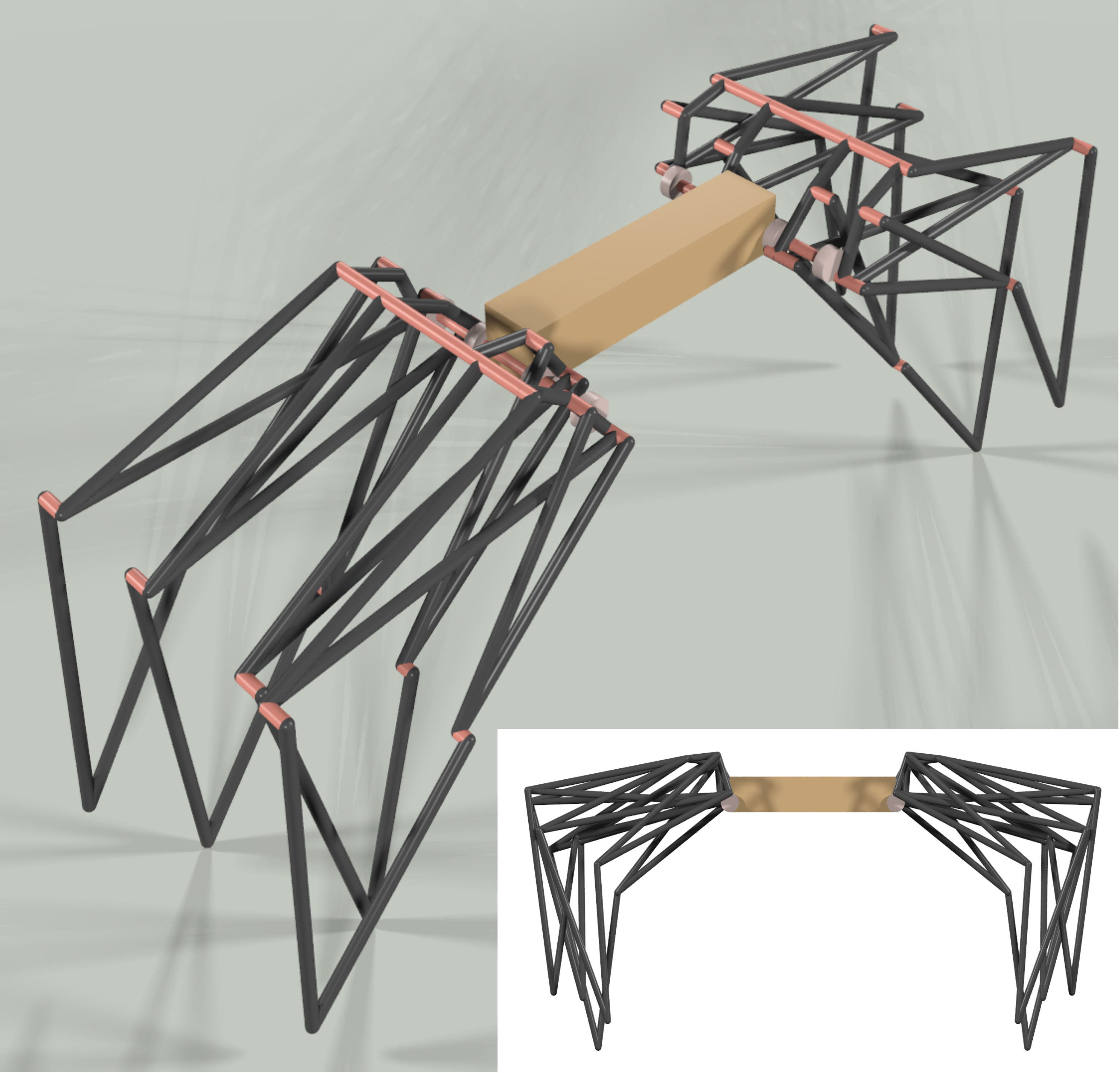}
\put(-65 ,10){(b)}
\end{tabular}
\caption{\label{fig:landscape} (a): After $200$ iterations of Bayesian optimization, we plot the approximate landscape of walking distance as a function of the three parameters: $\tau$, motor speed, and $\rho_l$ using volume rendering (red means higher objective function values). We also show the range and default value of each parameter. When a parameter is not optimized, we use its default value. (b): The best robot design in perspective and orthogonal view.}
\vspace{-5px}
\end{figure}
\refined{We explore the potential application of our optimized planar linkages in legged walking robotics, following a two-step semi-optimization approach similar to prior works \citep{soong2007simultaneous,erkaya2009determining,coros2013computational,Thomaszewski:2014:CDL:2601097.2601143}. Specifically, we first optimize a variety of different linkage geometries and topologies. We then manually choose one of these linkages as robot legs. Finally, we optimize the robot performance in an end-to-end manner.} To this end, we establish a testbed as illustrated in \prettyref{fig:boxConstraint} (d) where we mount a set of eight linkage structures shown in \prettyref{fig:boxConstraint} (c) onto a robot with a rectangular torso and two rotary motors, where four linkages are used as front legs and the other four as back legs. We simulate the robot walking on a flat terrain using the Bullet Physics Engine \citep{coumans2013bullet}. \refined{The robot motion is generated by creating a single rotary actuator on which a constant torque is applied, i.e. we assume the robot is not equipped with any sensor or controller. This is the case with many low-cost robots and mechanical toys.} There are several additional parameters to set up the robot simulator: the separation distance between the front and back legs $d_l$, the robot-to-ground frictional coefficient $\mu$, the motor torque $\tau$, the motor speed, the robot's mass density $\rho_r$, and the leg's mass density $\rho_l$. Since our formulation only considers the end-effector's curve and does not care about the robot's performance, we speculate that some fine-tuning is needed. We perform the fine-tuning by using Bayesian optimization \citep{eggensperger2013towards}, where our objective function is the distance traveled by the robot's center of mass over a simulated period of $10$ seconds. We first investigate which parameters must be fine tuned, so we run six passes of fine-tuning for each parameter; the results are summarized in \prettyref{fig:Bayesian} (a), where we observe the most significant performance increase by tuning $\tau$, the motor speed, and $\rho_l$. Next, we run another pass of fine-tuning jointly in these three parameters and observe a $4.3\times$ overall performance boost as shown in \prettyref{fig:Bayesian} (b) \refined{(with optimal values $\tau=7247(kgm^2/s^2)$, motor speed$=3.22(m/s)$, and $\rho_l=4.8(kg/m^3)$)}. Finally, we plot the landscape of the objective function that is approximated by using the Gaussian process in \prettyref{fig:landscape}, which is the output of Bayesian optimization after $200$ iterations. We can see that high objective function values only occupy a small fraction of the domain, so we conclude that fine-tuning the robot-mounting parameters is a necessity for linkage structures to gain high performance on robots. \refined{For example, we found that the minimal torque to drive the robot is $4500(kgm^2/s^2)$ as shown in \prettyref{fig:tau}.}
\begin{figure}[ht]
\centering
\includegraphics[width=0.45\textwidth]{./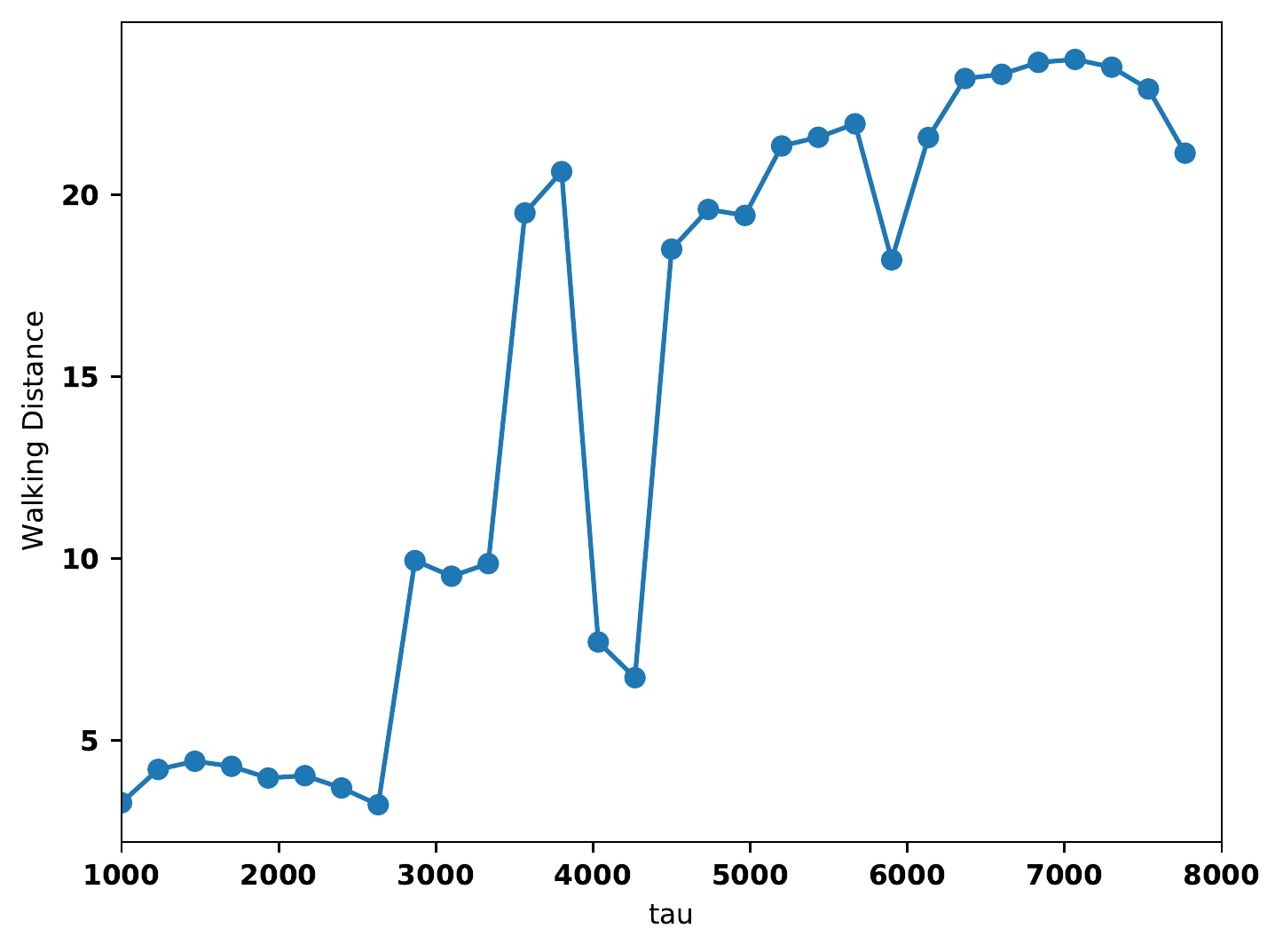}
\caption{\label{fig:tau} \refined{We fix the two variables at their optimal values (motor speed$=3.22(m/s)$, and $\rho_l=4.8(kg/m^3)$) and plot the robot's performance against the motor torque. As the torque changes between $[4500,7500](kgm^2/s^2)$, the performance only changes by $20\%$ of the optimal value.}}
\vspace{-5px}
\end{figure}

\begin{figure}[ht]
\centering
\setlength{\tabcolsep}{0pt}
\begin{tabular}{c}
\resizebox{.45\textwidth}{!}{\begin{tabular}{cc}
\includegraphics[height=0.45\textwidth,trim=0cm 0cm 40cm 0cm,clip]
{dx.sx.sy.pickle.pdf}
\put(-245,80){$s_x$}
\put(-65 ,60){$\Delta x$}
\put(-215,30){$s_y$}
\put(-95 ,210){(a): Walking Distance} &
\includegraphics[height=0.45\textwidth,trim=168cm 0cm 0cm 0cm,clip]
{dx.sx.sy.pickle.pdf}
\end{tabular}}\\
\includegraphics[width=0.45\textwidth]{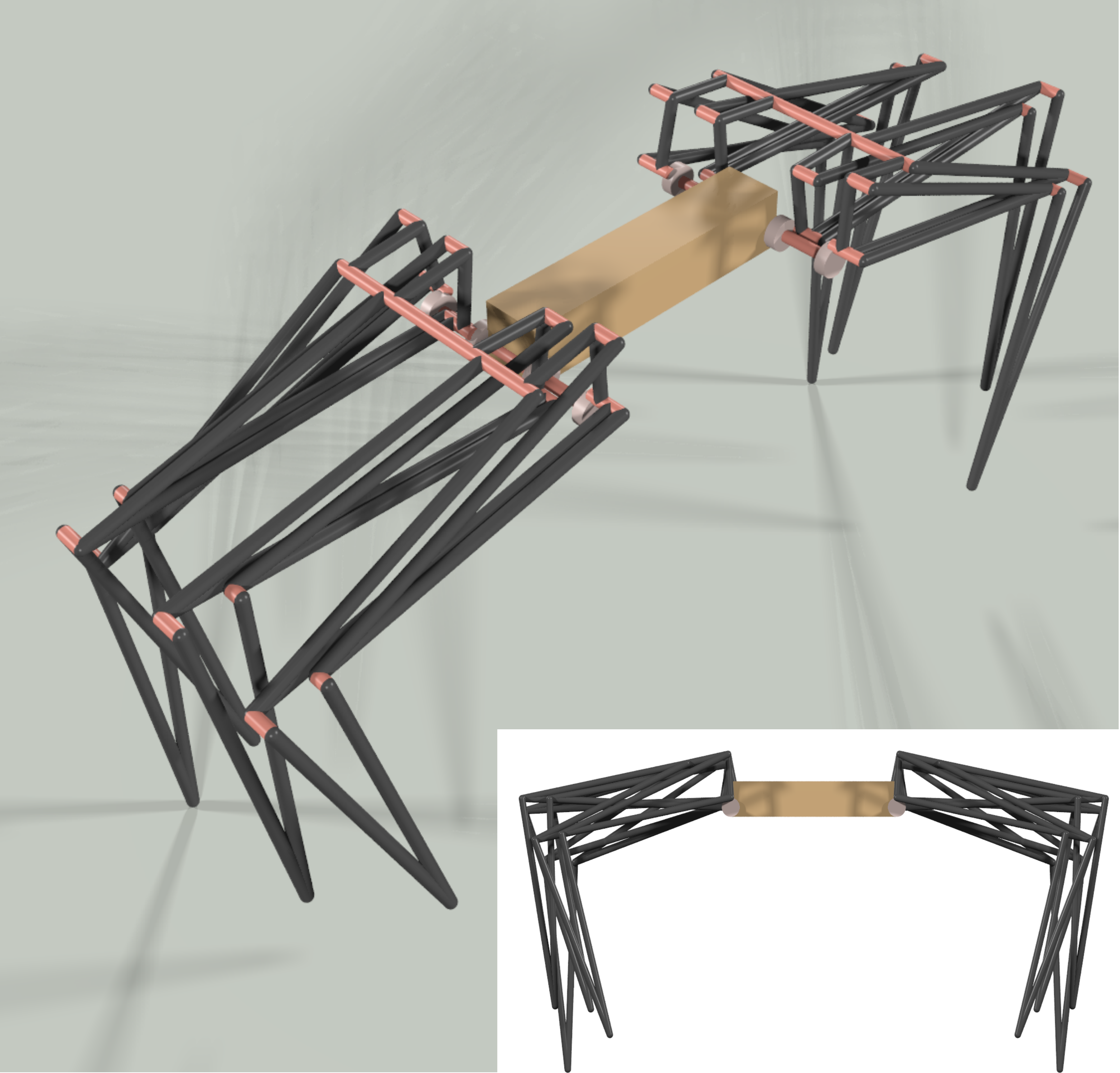}
\put(-65 ,10){(b)}
\end{tabular}
\caption{\label{fig:landscapeGeometry} (a): After $200$ iterations of Bayesian optimization, we plot the approximate landscape of walking distance as a function of the three parameters: $\Delta x$, $s_x$, and $s_y$ using volume rendering (red means higher objective function values). (b): The best robot design in perspective and orthogonal view.}
\vspace{-5px}
\end{figure}
The above fine-tuning modifies the mounting of linkage structure on the robot, while the geometry of the structure is fixed. We perform a separate fine-tuning that focuses on the geometry. There are too many parameters that specify the geometry and Bayesian optimization does not scale well to such high-dimensional decision spaces. To tackle this issue, we only modify the target curve and use local optimization to change the shape of the linkage structure. Specifically, we introduce four parameters: $\TWO{\Delta x}{\Delta y}\in[-3,3]^2$ and $\TWO{s_x}{s_y}\in[1,3]^2$ specify the translation and scaling of the target curve, respectively. Given $\TWO{\Delta x}{\Delta y}$ and $\TWO{s_x}{s_y}$, we use the L-BFGS-B algorithm to minimize the following local objective function where gradients are calculated using \prettyref{alg:forward}:
\begin{align*}
\int_0^{2\pi}
\left\|\NN_N(t)-\MTT{s_x}{}{}{s_y}\NN_N^*(t)-\TWOC{\Delta x}{\Delta y}\right\|^2 dt,
\end{align*}
where $\NN_N^*(t)$ is the target curve of \prettyref{fig:boxConstraint}. We fix the position of all fixed nodes, the center of motor $\TWO{X_C}{Y_C}$, and the radius of motor $r$, so that the linkage can be mounted on the robot in the same way. After local optimization, we mount the linkage structure on the robot and compute its walking distance over $10$ seconds of simulation. We optimize the walking distance with respect to the four parameters $\TWO{\Delta x}{\Delta y}$ and $\TWO{s_x}{s_y}$ by using Bayesian optimization. These four parameters are default to $\Delta x=\Delta y=0$ and $s_x=s_y=1$. We found that $\Delta x,s_x,s_y$ are the three most influential parameters. The walking distance as a function of these parameters and the optimized robot design are shown in \prettyref{fig:landscapeGeometry}. We observe that there are many designs leading to high walking distances. Therefore, the robot performance is not sensitive to the linkage geometry. 
\section{Conclusions and Discussions}
We have proposed a deterministic algorithmic framework for optimizing a large subset of planar linkages, such that the end-effector traces out a curve that matches the user-specified target curve. We show that, by modeling the linkage structure by using maximal instead of minimal coordinates, the joint optimization of both the topology and geometry can be reformulated as an infinite-dimensional, non-convex MIQCQP. We further proposed a discretization scheme and three algorithms to solve MIQCQP approximately. Our first algorithm relaxes non-convex constraints as a disjoint convex set, allowing an MICP solver to find the global optima of the relaxed problem. Our second algorithm uses MINLP to find local feasible solutions via SQP. We highlight that, compared to the SA-baseline, our deterministic algorithm achieves $9.3\times$ higher optimality in a row of benchmarks (measured by the average ratio of objective function values in \prettyref{table:results}) and can take various additional constraints. These promising results can induce several avenues of future work.

Our work can find small linkage structures with up to $7$ nodes. Even at such a small scale, solving MIQCQP is still computationally intensive, taking tens of hours on a desktop machine. There are, however, several ways to further accelerate the algorithm. First, with the availability of multi-core processors, the BB algorithm can be parallelized by exploring multiple nodes simultaneously. Second, the efficacy of the BB algorithm is closely related to the heuristic rules for expanding the search tree, generating cutting planes, and warm-starting the node solutions, for which dedicated heuristic rules can be designed for our problem. Finally, the relaxation scheme of non-convex constraints can be optimized to reduce the approximation error as described in \cite{nagarajan2019adaptive}. For example, the sample positions in \prettyref{fig:upperBound} can be adaptively selected. This method can also minimize the number of samples ($S$) and reduce the number of binary variables.

\refined{Our current implementation does not allow users to specify the timing for the end-effector to reach each sample point on the target curve. Currently, we support two default timing schemes: 1) even sampling the target curve and assuming equal travel time between consecutive samples; 2) arbitrary travel time and order for all the samples, as illustrated in \prettyref{fig:orderIndependent}. This is due to two reasons. First, it is difficult and non-intuitive for users to specify the exact timing via a GUI interface. Second, at such a small scale with up to $7$ nodes, we have not observed significantly different designs using different timing schemes and cases such as \prettyref{fig:orderIndependent} are rare. We expect a larger solution space would lead to a variety of designs corresponding to more deliberate timing specification.}

Our method can only approximate the solutions of MIQCQP and sometimes can fail at finding a feasible solution. \refined{Indeed, we only impose non-singular constraints (\prettyref{eq:area}) at discrete time instances to avoid infinite constraints and \prettyref{lem:nonsingular} is not guaranteed to hold as a result. Our MICP solver further approximates the non-convex constraints as piecewise convex ones.} By comparison, the SA-baseline is guaranteed to return a solution, although it can drift far from the user's requirement. It is worthwhile to explore an approximation scheme for relaxing MIQCQP. One promising direction is to consider the semidefinite lower-bound that turns a quadratic constraint into a linear matrix inequality \citep{vandenberghe1996semidefinite}. More generally, the sum-of-squares programming allows any polynomial optimization to be converted into a semidefinite programming problem \citep{laurent2009sums}, and the conversion is exact under certain conditions. Such conversions can be used to derive the lower-bound in BB algorithms. Recent work \citep{9341017} has applied this idea to the inverse kinematic problems of sequential manipulators. The main advantage of sum-of-squares programming is that the lower bound can be made arbitrarily tight.

\refined{Finally, a major limitation of our method is that we only optimize the kinematics and geometric features of the linkage structure, which is not sufficient for many robotic applications, especially when deployed onto a physical robot platform. In our robot walking results for example, optimizing the dynamics properties, e.g., joint torques, frictional coefficients, mass distributions, is key to the overall final performance. Our current experiments optimize these dynamics properties using Bayesian exploration as a separate post-process, assuming fixed geometry and topology. This is a standard approach used by several prior works to automatically tune the dynamics properties. For example, \cite{bai2018reducing} proposed an optimization method to reduce the vibration. \cite{feng2002new} optimized the mass distribution to reduce the needed joint force. Truss optimization \citep{sokol201199} typically maximizes the strength of a linkage structure under external forces. Joint formulations such as \cite{soong2007simultaneous} have also been proposed that simultaneously minimize the motor torques and maximize the structure strength. \cite{erkaya2009determining} used simulated annealing to adjust multiple dynamics parameters under the influence of joint clearance. However, we expect that higher performance can be achieved by considering kinematics and dynamics into a single, joint optimization formulation. In many applications, the dynamic properties, e.g., material densities and motor torques, are pre-determined by hardware specifications, and the optimized linkage structures should satisfy these specifications as hard constraints that may also include collision handling~\cite{govindaraju2005quick,kim2002fast}. Unfortunately, formulating these considerations would significantly increase the complexity and computational time, so we leave them as future work.} 

\section*{Acknowledgement}
This research is supported in part by ARO grant W911NF-18-1-0313, and Intel.
\bibliographystyle{SageH}
\bibliography{reference}
\end{document}